\smartqed  \usepackage{graphicx}
\newcommand{\dotprod}[2]{\ensuremath{\langle #1 , #2\,\rangle}} 
\newcommand{\norm}[1]{|\!| #1 |\!|}
\def\dbR{\mathbb{R}}
\def\x{\mathbf{x}}
\def\X{\mathbf{X}}
\def\u{\mathbf{u}}
\def\v{\mathbf{v}}
\def\Z{\mathbf{Z}}
\def\E{\mathbf{E}}
\def\K{\mathbf{K}}
\def\z{\mathbf{z}}
\def\t{\mathbf{t}}
\def\M{\mathbf{M}}
\def\T{\mathbf{T}}
\def\P{\mathbf{P}}
\def\p{\mathbf{p}}
\def\C{\mathbf{C}}
\def\I{\mathbf{I}}
\def\L{\mathbf{L}}
\def\zero{\mathbf{0}}
\def\balpha{\boldsymbol{\alpha}}
\def\bbeta{\boldsymbol{\beta}}
\newcommand{\one}{{\mathbf{1}}} 
\DeclareMathOperator{\tr}{tr}
\DeclareMathOperator{\argmin}{argmin}
\DeclareMathOperator{\diag}{diag}
\newcommand{\rev}[1]{{\color{black}#1}}
 \journalname{Machine Learning Journal}
\begin{document}

\title{Wasserstein Discriminant Analysis
\thanks{
This work was supported in part by grants from the ANR OATMIL ANR-17-CE23-0012, Normandie
Region, Feder, CNRS PEPS DESSTOPT, Chaire d'excellence de l'IDEX Paris Saclay.\\
The authors also want to thank Alexandre Saint-Dizier, Charles Bouveyron and Julie Delon for fruitful discussion and for pointing out the class weights in Fisher Discriminant.
}}
\subtitle{}

\author{R\'emi Flamary         \and Marco Cuturi \and Nicolas Courty
  \and Alain Rakotomamonjy
         }

\institute{R. Flamary \at
              Lagrange, Observatoire de la C\^ote d'Azur\\
              Universit\'e  C\^ote d'Azur \\
              \email{remi.flamary@unice.fr}                      \and
           M. Cuturi \at
           CREST, ENSAE \\
		   Campus Paris-Saclay
		   5, avenue Henry Le Chatelier
		   91120 Palaiseau, France
           \and
           N. Courty \at
           Laboratoire IRISA \\
           Campus de Tohannic \\
           56000 Vannes, France	\\
           \and
           A. Rakotomamonjy \at
           LITIS EA4108,  \\
           Universit\'e Rouen Normandie
}

\date{Received: date / Accepted: date}

\maketitle

\begin{abstract}
Wasserstein Discriminant Analysis (WDA) is a new supervised linear
dimensionality reduction algorithm. Following the
blueprint of classical Fisher Discriminant Analysis, WDA selects
the projection matrix that maximizes the ratio of the
dispersion of projected points pertaining to different classes and
the dispersion of projected points belonging to a same class. To
quantify dispersion, WDA uses regularized Wasserstein distances.
 Thanks to the underlying principles of
optimal transport, WDA is able to capture both global (at distribution
scale) and local (at samples' scale) interactions between
classes. In addition, we show that WDA leverages a mechanism
that induces neighborhood preservation.
 Regularized Wasserstein distances can be computed using the
Sinkhorn matrix scaling algorithm;  the optimization problem of
WDA can be tackled using automatic differentiation of Sinkhorn's fixed-point
iterations. Numerical experiments show promising results both in terms
of prediction and visualization on toy examples and real datasets
such as MNIST and on deep features obtained from a subset of the
Caltech dataset. 
\keywords{Linear Discriminant Analysis \and Optimal Transport \and
  Wasserstein Distance}
\end{abstract}

\section{Introduction}
\label{sec:introduction}

Feature learning is a crucial component in many applications of machine learning.
New feature extraction methods or data representations are often responsible for breakthroughs in performance, as illustrated by the
kernel trick in support vector machines \citep{scholkopf2002learning} and their feature
learning counterpart in multiple kernel
learning \citep{bach2004multiple}, and more recently by deep
architectures \citep{bengio2009learning}. 

Among all the feature extraction approaches, one major family of
dimensionality reduction methods \citep{van2009dimensionality,burges2010dimension}
consists in estimating a linear subspace of the data. \rev{Although very simple, linear subspaces have many advantages. They are easy to
interpret, and can be inverted, at least in a lest-squares way. This latter property has been used for instance in PCA
denoising \citep{zhang2010two}. }
Linear projection is also a key component in random
projection methods \citep{fern2003random} or compressed sensing and is
often used as a first pre-processing step, such as the linear part in
a neural network layer. \rev{Finally, linear projections only imply matrix products and stream therefore particularly well on any type of hardware (CPU,
GPU, DSP).}

 Linear dimensionality reduction techniques come in all flavors. Some of them, such as PCA, are
inherently unsupervised; some can consider labeled
data and fall in the supervised category. We consider in this paper
\emph{linear} and \emph{supervised} techniques. Within that category,
two families of methods stand out: Given a dataset of pairs of vectors
and labels $\{(\x_i,y_i)\}_i$, with $\x_i \in \mathbb{R}^d$, the goal
of \emph{Fisher Discriminant Analysis} (FDA) and variants is to learn
a linear map $\P:\mathbb{R}^d \rightarrow \mathbb{R}^p$, $p\ll d$,
such that the embeddings of these points $\P \x_i$ can be easily
discriminated using linear classifiers. \emph{Mahalanobis metric
  learning} (MML) follows the same approach, except that the quality
of the embedding $\P$ is judged by the ability of a $k$-nearest
neighbor algorithm (not a linear classifier) to obtain good
classification
accuracy. 
\paragraph{FDA and MML, in both Global and Local Flavors.} FDA  attempts to maximize w.r.t. $\P$ the sum of \emph{all} distances
$\norm{\P\x_i-\P\x_{j'}}$ between pairs of samples from different
classes $c,c'$ while minimizing the sum of \emph{all} distances
$\norm{\P\x_i-\P\x_j}$ between pairs  of samples within the same class
$c$ \citep[\S4.3]{friedman2001elements}.
Because of this, it is well documented that the performance of FDA degrades when class distributions are multimodal. Several variants have been proposed to tackle this problem~\citep[\S12.4]{friedman2001elements}. For instance, a localized version of FDA was proposed by \citet{sugiyama2007dimensionality}, which boils down to discarding the computation for all pairs of points that are not neighbors.
\begin{figure*}[t]
  \centering
 \includegraphics[height=2.6cm]{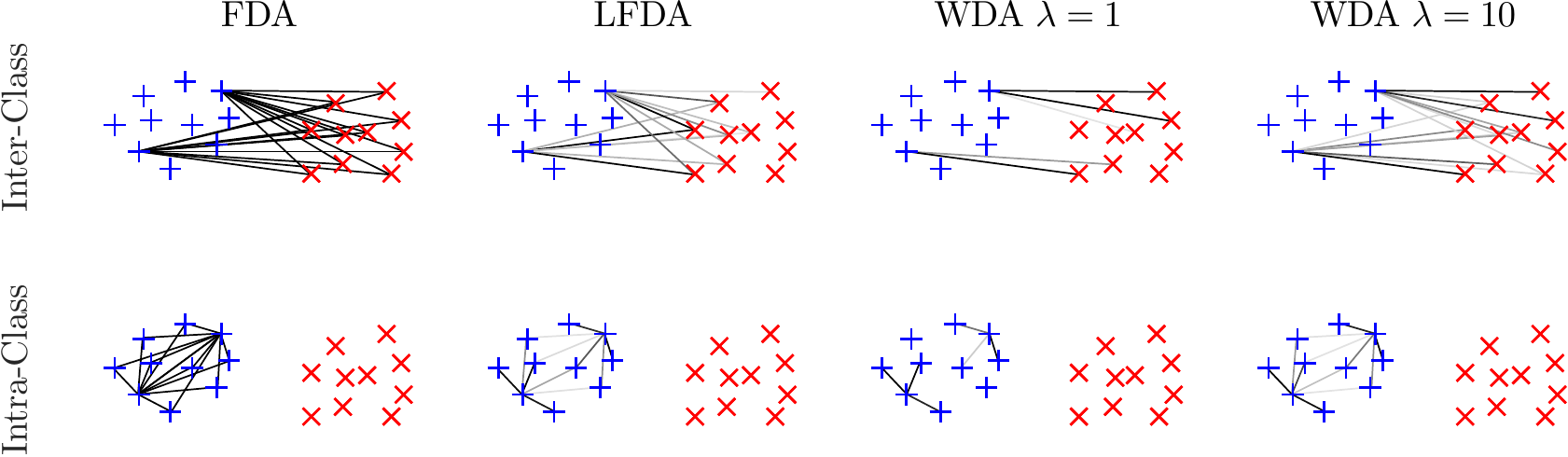}\hspace{5mm}
\includegraphics[height=2.6cm]{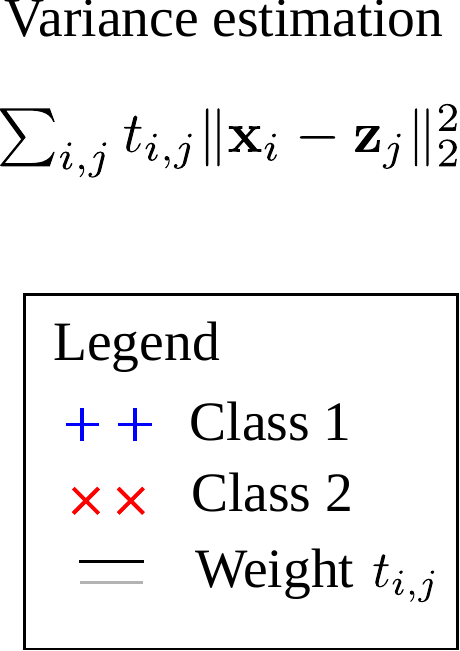}
  \caption{Weights used for inter/intra class
    variances for FDA, Local FDA and WDA for different regularizations $\lambda$. Only weights for two
    samples from class 1 are shown. The color of the link darkens as the weight grows. FDA computes a global
    variance with uniform weight on all pairwise distances, whereas LFDA focuses only on samples that lie close to each other. WDA relies on an optimal transport matrix $\T$ that matches all points in one class to all other points in another class (most links are not visible because they are colored in white as related weights are too small). WDA has both a global (due to matching constraints) and local (due to transportation cost minimization) outlook on the problem, with a tradeoff controlled by the regularization strength $\lambda$.}
  \label{fig:regwass}
\end{figure*}
On the other hand, the first techniques that were proposed to learn metrics~\citep{xing2003distance} used a \emph{global} criterion, namely a sum on all pairs of points. Later on, variations that focused instead exclusively on \emph{local} interactions, such as LMNN~\citep{weinberger2009distance}, were shown to be far more efficient in practice. Supervised dimensionality approaches stemming from FDA or MML consider thus \emph{either} global \emph{or} local interactions between points, namely, either all differences $\norm{\P\x_i-\P\x_j}$ have an equal footing in the criterion they optimize, or, on the contrary, $\norm{\P\x_i-\P\x_j}$ is only considered for points such that $\x_i$ is close to $\x_j$. 

\paragraph{WDA: Global and Local.} We introduce in this work a novel
approach that incorporates both a global and local perspective. WDA
can achieve this blend through the mathematics of optimal transport
(\rev{ see for instance the recent book of~\cite{peyre18}
for an introduction and exposition of some of the computational we will use in this paper). Optimal transport provides a powerful toolbox to compute distances
between two empirical probability distributions. Optimal transport does so by considering all probabilistic couplings
between these two measures, to select one, denoted $T$, that is optimal for a given criterion. }This coupling now describes interactions at both a global and local scale, as reflected by the transportation weight $T_{ij}$ that quantifies how important the distance
$\norm{\P\x_i-\P\x_{j}}$ should be to obtain a good projection matrix $\P$.  Indeed,
such weights are decided by \rev{\emph{(i)} making sure that \emph{all}
points in one class are matched to \emph{all} points in the other class
(global constraint\rev{, derived through marginal constraints over the coupling}); \emph{(ii)} making sure that points in one class are  matched only to few \rev{similar} points of the other class \rev{(local constraint, thanks to the optimality of the
coupling, that is a function of local costs}). }
Our method has the added flexibility that it can interpolate, through a regularization parameter, between an
exclusively global viewpoint (identical, in that case, to FDA), to a
more local viewpoint with a global matching constraint (different, in
that sense, to that of purely local tools such as LMNN or
Local-FDA). In mathematical terms, we adopt the ratio formulation of
FDA to maximize the ratio of the regularized Wasserstein distances
between inter class populations and between the intra-class population
with itself, when these points are considered \emph{in their projected
  space}:
 \begin{equation}
  \max_{\P\in\Delta}\quad\frac{\sum_{c,c'> c}W_\lambda(\P\X^{c},\P\X^{c'})}{\sum_{c}W_\lambda(\P\X^{c},\P\X^{c})}\label{eq:wdaprob}
\end{equation}
where $\Delta=\{ \P=[\mathbf{p_1},\dots,\mathbf{p}_p] \quad|\quad \p_i \in \dbR^d, 
\|\mathbf{p}_i\|_2=1 \quad\text{ and
}\p_i^\top\p_j=0\text{ for } i\neq j \}$ is the Stiefel manifold \citep{absil2009optimization}, the set of orthogonal
$d\times p$ matrices;
  $\P\X^{c}$ is the matrix of projected samples from class $c$.
$W_\lambda$ is the  regularized Wasserstein
distance proposed by \citet{CuturiSinkhorn}, which can be expressed as
$W_\lambda(\X,\Z)=\sum_{i,j}T^\star_{i,j}\|\x_i-\z_j\|^2_2$, $T^\star_{i,j}$
being the coordinates of the
entropic-regularized Optimal Transport (OT) matrix $\T^\star$ (see \S\ref{sec:regul-wass-dist}). 
\rev{
These entropic-regularized Wasserstein distances 
measure the dissimilarity between empirical distributions by considering  pairwise distances between samples. The strength of the regularization $\lambda$ controls the local information involved in the distance computation. 
} \rev{Further analyses and intuitions 
on the role on the within-class distances in the optimization problem
are given in the Discussion section.}

When $\lambda$ is small, we will show that WDA boils down to FDA.
When $\lambda$ is large, WDA tries to split apart distributions of classes by maximizing their optimal transport distance. 
In that process,  for a given example $\x_i$  in one class, only few
components $T_{i,j}$ will be activated so that  $\x_i$ will be paired
with few examples. Figure \ref{fig:regwass} illustrates how pairing
weights $T_{i,j}$ are defined when comparing Wasserstein discriminant
analysis (WDA, with different regularization strengths) with FDA
(purely global), and Local-FDA (purely local)
\citep{sugiyama2007dimensionality}. Another strong feature brought by
regularized Wasserstein distances 
is that relations between samples (as given by the optimal transport
matrix $\T$) are estimated in the projected space. This is an
important difference compared to all previous local approaches which
estimate local relations 
in the original space and make the hypothesis that these
relations are unchanged after projection.

\paragraph{Paper outline.} Section \ref{sec:theor-backgr} provides 
background on regularized Wasserstein distances. The WDA  criterion
and its practical optimization is presented in
Section \ref{sec:wass-discr-analys}.  Section \ref{sec:discussions} by
discusses properties of WDA and related works. Numerical experiments
are provided in Section~\ref{sec:numer-exper}. Section \ref{sec:conclusion}
concludes the paper and introduces perspectives.

\section{Background on Wasserstein distances}
\label{sec:theor-backgr}

Wasserstein distances, also known as earth mover
distances, define a geometry over the space of probability measures using principles from optimal transport theory~\citep{villani08}. Recent computational advances \citep{CuturiSinkhorn,benamou2015iterative} have made them scalable to dimensions relevant to machine learning applications.

\paragraph{Notations and Definitions.}
Let $\mu=\frac{1}{n}\sum_{i} \delta_{\x_i}$, $\nu=\frac{1}{m}\sum_{i}
\delta_{\z_i}$ be two empirical measures with locations in
$\mathbb{R}^d$ stored in matrices
$\X=[\x_1,\cdots,\x_n]$ and $\Z=[\z_1,\cdots,\z_m]$. The pairwise squared Euclidean distance matrix between samples in $\mu$ and $\nu$ is defined as
$\M_{\X,\Z}:= [\norm{\x_i-\z_j}_2^2]_{ij} \in\mathbb{R}^{n\times m}.$
Let $U_{nm}$ be the polytope of $n\times m$ nonnegative matrices such that their row and column marginals are equal to $\mathbf{1}_n/n$ and $\mathbf{1}_m/m$ respectively. 
Writing $\one_n$ for the $n$-dimensional vector of ones, we have:
\begin{equation*}\label{eq:polytope} 
	U_{nm}:= \{ \T\in\mathbb{R}_+^{n\times m}\;:\; \T \mathbf{1}_m = \mathbf{1}_n/n,\, \T^T \mathbf{1}_n = \mathbf{1}_m/m \}.
\end{equation*}

\paragraph{Regularized Wassersein distance.}
\label{sec:regul-wass-dist}
Let $\dotprod{\mathbf{A}}{\mathbf{B}}:= \tr(\mathbf{A}^T \mathbf{B})$ be the Frobenius dot-product of matrices. 
For $\lambda\geq 0$, the regularized Wasserstein distance we adopt in this paper between $\mu$ and $\nu$ is (and with a slight abuse of notation):
\begin{equation}\label{eq:sink}W_\lambda(\mu,\nu) :=W_\lambda(\X,\Z) :=\dotprod{\T_\lambda}{\M_{\X,\Z}},\end{equation}
	where $\T_\lambda$ is the solution of an entropy-smoothed optimal transport problem,
\begin{equation}\label{eq:primal_sol}\T_\lambda :=\argmin_{\T\in
    U_{nm}}\,\lambda\dotprod{\T}{\M_{\X,\Z}}-\Omega(\T),\end{equation}
where $\Omega(\T)$ is the entropy of $\T$ seen as a discrete joint probability distribution, namely $\Omega(\T):=-\sum_{ij}t_{ij} \log(t_{ij}).$
Note that problem (\ref{eq:primal_sol}) can be solved very efficiently
using Sinkhorn's
 fixed-point iterations \citep{CuturiSinkhorn}. The
solution of the optimization problem can be expressed as:
\begin{equation}\label{eq:solregwda}
\rev{
  \T=\diag(\u) \K\diag(\v)
=\u\one_m^T \circ \K  \circ\one_n {\v}^T, }
\end{equation}
where $\circ$ stands for elementwise multiplication 
\rev{and $\K$ is
the matrix whose elements are $K_{i,j} = e^{-\lambda M_{i,j}}$.}
 The Sinkhorn iterations
consist in updating left/right scaling vectors $\u^k$ and $\v^k$ of
the matrix $\K=e^{-\lambda\M}$. These updates take the following form
for iteration $k$:
\begin{align}
  \v^k=\frac{\one_m/m}{{\K}^T\u^{k-1}}, \quad \u^k=\frac{\one_n/n}{\K\v^k} \label{eq:fixedpoint}
\end{align}
 with an initialization which will be fixed to $\u^0=\one_n$. Because it only involves matrix products, the Sinkhorn algorithm can be streamed efficiently on parallel architectures such as GPGPUs.

\section{Wasserstein Discriminant Analysis}
\label{sec:wass-discr-analys}

In this section we discuss optimization problem~\eqref{eq:wdaprob} and propose an efficient approach to compute the gradient of its objective.

\subsection{Optimization problem.} To simplify notations, let us define a separate
empirical measure for each of the $C$ classes:  samples of 
class $c$ are stored in
matrices $\X^c$; the number of samples from class $c$ is $n_c$.
Using the definition~\eqref{eq:sink} of regularized Wasserstein
distance, we can write the Wasserstein Discriminant
Analysis optimization problem as
\begin{align}
 \label{eq:wdabilevel} 
\max_{\P\in\Delta} \quad &\left\{J(\P,\T(\P))=\frac{\sum_{c,c'> c}
   \dotprod{\P^T\P}{\C^{c,c'}}
}{\sum_{c}
   \dotprod{\P^T\P}{\C^{c,c}}}\right\}&\\\vspace{-2mm}\nonumber
 \text{s.t. }  \C^{c,c'}&=\sum_{i,j}T^{c,c'}_{i,j}(
  \x^c_i-\x^{c'}_j)( \x^c_i-\x^{c'}_j)^T,\quad  \forall c,c'\\\nonumber
\text{and } \T^{c,c'}&=\small\argmin_{\T\in
                        U_{n_cn_{c'}}}\,\lambda\dotprod{\T}{\M_{\P\X^{c},\P\X^{c'}}}-\Omega(\T),\nonumber                      \end{align}
which can be reformulated as the following bilevel problem
\begin{align}
&  \max_{\P\in\Delta} \quad\quad J(\P,\T(\P))\label{eq:outer}\\
&\text{s.t. } \T(\P)=\argmin_{\T\in U_{n_cn_{c'}}}\quad E(\T,\P) \label{eq:inner}
\end{align}
where $\T=\{\T^{c,c'}\}_{c,c'}$ contains all the transport matrices
between classes and the inner problem
function $E$  is  defined as
\begin{equation}
 E(\T, \P)=\sum_{c,c>=c'}
\lambda\dotprod{\T^{c,c'}}{\M_{\P\X^{c},\P\X^{c'}}}-\Omega(\T^{c,c'}). \label{eq:jloss}
\end{equation}
The objective function $J$ can be expressed as 
$$J(\P,\T(\P))= \frac{
   \dotprod{\P^T\P}{\C_{b}}
}{
   \dotprod{\P^T\P}{\C_{w}}}$$
where $\C_b=\sum_{c,c'> c}\C_{c,c'}$ and $\C_w=\sum_{c} \C_{c,c}$ are
the between and within cross-covariance matrices that depend on
$\T(\P)$. Optimization problem \eqref{eq:outer}-\eqref{eq:inner} is a bilevel optimization problem, which can be solved using gradient descent
\citep{colson2007overview}. Indeed, $J$ is differentiable
with respect to $\P$.  This comes from the fact that 
optimization problems in Equation \eqref{eq:inner} are all strictly convex,
making solutions of the problems unique, hence 
$\T(\P)$ is smooth and differentiable \citep{bonnans1998optimization}.

Thus, one can
compute the gradient of $J$ directly \emph{w.r.t.} $\P$ using the chain rule as follows
\begin{equation}\small
\nabla_\P J(\P,\T(\P))=\frac{\partial J(\P,\T)}{\partial \P }
+\sum_{c,c'\geq c} \frac{\partial J(\P,\T)}{\partial \T^{c,c'}}
\frac{\partial \T^{c,c'}}{\partial \P}
\label{eq:fullgrad}
\end{equation}
The first term in gradient \eqref{eq:fullgrad} suppose that $\T$ is
constant and can be computed (Eq. 94-95 \citep{petersen2008matrix}) as
\begin{equation}
\frac{\partial J(\P,\T)}{\partial \P
}=\P\left(\frac{2}{\sigma_w^2}\C_{b}-\frac{2\sigma_b^2}{\sigma_w^4}\C_{w}\right)\label{eq:djdp}
\end{equation}
with $\sigma_w^2=\dotprod{\P^T\P}{\C_{w}}$ and
$\sigma_b^2=\dotprod{\P^T\P}{\C_{b}}$.
In order to compute the second term in \eqref{eq:fullgrad}, we will separate the cases when
$c=c'$ and $c\neq c'$ as it corresponds to their position in the
fraction of Equation (\ref{eq:wdabilevel}).  Their partial
derivative is obtained directly from the scalar product and is a
weighted vectorization of the transport cost matrix 
\begin{equation}\label{eq:djdt}
 \frac{\partial J(\P,\T)}{\partial \T^{c,c'\neq
  c}}=\text{vec} \Big(\frac{1}{{\sigma_w^2}}\M_{\P\X^{c},\P\X^{c'}}\Big) \quad
\text{and} \quad \frac{\partial J(\P,\T)}{\partial \T^{c,c}}=-\text{vec}\Big({\frac{\sigma_b^2}{\sigma_w^4}}\M_{\P\X^{c},\P\X^{c}}\Big).
\end{equation}

We will see in the
remaining that the main difficulty stands in
computing the Jacobian $ {\partial \T^{c,c'}}/{\partial \P}$
since the optimal transport matrix is not available as a closed form.
We solve this problem using instead an automatic differentiation approach
wrapped around the Sinkhorn fixed point iteration algorithm.

\subsection{Automatic Differentiation.}
\label{sec:grad-comp-using}
A possible way to compute the Jacobian $\partial \T^{c,c'}/\partial
\P$ is to use the implicit function theorem as in
hyperparameter estimation in ML
\citep{bengio2000gradient,chapelle2002choosing}. 
We detail that approach in the appendix but it requires inverting a
very large matrix, and does not scale in practice. It also assumes that the exact optimal transport $\T_\lambda$ is obtained at each iteration, which is clearly an approximation since we only have the computational budget for a finite, and usually small, number of Sinkhorn
iterations. 

Following the gist of \citet{bonneel2016wasserstein}, which
do not differentiate Sinkhorn iterations but a more complex fixed
point iteration designed to compute Wasserstein barycenters, we
propose in this section to differentiate the transportation matrices
obtained after running exactly $L$ Sinkhorn iterations, with a
predefined $L$. Writing $\T^k(\P)$, for the solution obtained after $k$
iterations as a function of $\P$ for a given $c,c'$ pair,
$$
  \T^k(\P) = \diag(\u^k)e^{-\lambda\M}\diag(\v^k)
$$
where $\M$ is the distance matrix induced by $\P$. $\T^L(\P)$ can then be directly differentiated:
\begin{align}\label{eq:gradT}
\frac{\partial \T^k}{\partial \P} & = \frac{\partial [\u^k\one_m^T]}{\partial \P} \circ e^{-\lambda\M}\circ\one_n {\v^k}^T \\\nonumber
 +&\u^k\one_m^T \circ \frac{\partial e^{-\lambda\M}} {\partial \P}\circ\one_n {\v^k}^T 
+ \u^k\one_m^T \circ e^{-\lambda\M}\circ\frac{\partial{ [\one_n\v^k}^T]}{\partial \P}  
\end{align}
Note that the recursion occurs as $\u^k$ depends on $\v^k$ whose is also
related to $\u^{k-1}$. The Jacobians that we need can then be obtained from Equation~\eqref{eq:fixedpoint}. For instance, the gradient of
one component of $\u^k\one_m^T$ at the $j$-th line  is
\begin{equation}\label{eq:gradu}
\frac{\partial \u^k_j}{\partial \P}=
-\frac{1/n}{[\K \v^k]_j^2}\Big(\sum_i \frac{\partial \K_{j,i}}{ \partial \P}\v_i^k
+  \sum_i \K_{j,i} \frac{\partial \v_i^k}{ \partial \P} \Big),
\end{equation}
while for $\v^k$, we have
\begin{equation}\label{eq:gradv}
\frac{\partial \v^k_j}{\partial \P}=
-\frac{1/m}{[\K^T \u^{k-1}]_j^2}\Big(\sum_i \frac{\partial \K_{i,j}}{ \partial \P}\u_{i}^{k-1}
+  \sum_i \K_{i,j} \frac{\partial \u_i^{k-1}}{ \partial \P} \Big),
\end{equation}
and finally
$$
\frac{\partial \K_{i,j}}{\partial \P}= - 2K_{i,j} \P(\x_i - \x_j^\prime)
(\x_i - \x_j^\prime)^T.
$$
The Jacobian $\frac{\partial \T^k}{\partial \P}$ can be thus obtained by keeping track of 
all the Jacobians at each iteration and then by successively applying
those equations.  This approach is far cheaper than the implicit function theorem
approach. Indeed, in this case, the computation of
$\frac{\partial \T}{\partial \P}$ is dominated by the complexity of computing
$\frac{\partial \K}{\partial \P}$ whose costs for one iteration is
$O(pn^2d^2)$ for
$n=m$. The complexity is then linear in $L$ and quadratic in $n$.

\rev{
\subsection{Algorithm}

In the above subsections, we have reformulated the WDA optimization problem
so as to make it tractable. We have derived closed-form expressions of
some elements of the gradient as well as an automatic differentiation
strategy for computing gradients of the transport plans $\T^{c,c^\prime}$ with respects to $\P$.    

Now that all these partial derivatives are computed, we can compute 
the gradient  $\mathbf{G}^k = \nabla_\P J(\P^k, \T(\P^k))$ 
at iteration $k$ and apply  classical manifold optimization tools such as
projected gradient of \citet{schmidtminconf} or a trust region algorithm as
implemented in Manopt/Pymanopt \citep{boumal2014manopt,koep2016pymanopt}. The latter toolbox includes tools to optimize over the Stiefel manifold, notably automatic conversions from
Euclidean to Riemannian gradients.
Algorithm \ref{alg:projectedgrad} provide the steps of a projected gradient 
approach for solving WDA. We noted in there that at each iteration, we need
to compute all the transport plans $\T^{c,c^\prime}$, which are needed
for computing $\C_b$ and $\C_w$. Automatic differentiation in the last
steps of Algorithm \ref{alg:skad} takes advantage of these transport
plan computations for calculating and storing partial derivatives
needed for Equation \ref{eq:gradT}.  

From a computational complexity point of view, for each projected gradient iteration, we have the following complexity, considering that all classes are
composed of $n$ samples.  For one iteration of the Sinkhorn
-Knopp algorithm given in Algorithm \ref{alg:skad}, $\u^k$ and $\v^k$ are of complexity  $\mathcal{O}(n^2)$ while   $\{\frac{\partial \v_{j}^k}{ \partial \P}\}_j$
and $\{\frac{\partial \u_{j}^k}{ \partial \P}\}_n$ are both
 $\mathcal{O}(n^2dp)$. In this algorithm, complexity is dominated by
the one of $\frac{\partial \K_{i,j}}{ \partial \P}$ which costs is $\mathcal{O}(n^2d^2p)$, although it is computed only once. 
In Algorithm \ref{alg:projectedgrad}, the costs of $\C_b$ and $\C_w$ are 
$\mathcal{O}(n^2d^2)$ and Equation~(\ref{eq:djdp})~and~(\ref{eq:djdt}) 
yields respectively a complexity of $\mathcal{O}(pd^2)$ and $\mathcal{O}(n^2d^2p)$. Note that the cost of computing $\frac{\partial \T^k}{\partial \P}$ is
dominated by those of   $\{\frac{\partial \v_{j}^k}{ \partial \P}\}_j$
and $\{\frac{\partial \u_{j}^k}{ \partial \P}\}_n$. Finally, the cost
 computing the sum in $\mathbf{G}^k = \nabla_\P J(\P^k, \T(\P^k)$ achieves 
a global complexity of $\mathcal{O}(C^2n^2dp)$. In conclusion, our algorithm
is quadratic in both the number of samples in the classes and in the original
dimension of the problem. 

\begin{algorithm}[t]
\caption{Projected gradient algorithm for WDA}\label{alg:projectedgrad}
  \begin{algorithmic}[1]
\REQUIRE $\Pi_\Delta$ : projection on the Stiefel manifold
  \STATE Initialize $k = 0$, $\P^0$
  \REPEAT
  \STATE compute all the $\T^{c,c^\prime}$ as given in Equation~(\ref{eq:inner}) by means of Algorithm \ref{alg:skad}
  \STATE compute  $\C_b$ and $\C_w$
  \STATE compute Equation (\ref{eq:djdp}) for $\P^k$
  \STATE compute Equation (\ref{eq:djdt}) for $\P^k$
  \STATE compute $\frac{\partial \T^k}{\partial \P}$ using automatic differentiation based on Equations (\ref{eq:gradT}), (\ref{eq:gradu}) and (\ref{eq:gradv}) 
  \STATE compute gradient $\mathbf{G}^k = \nabla_\P J(\P^k, \T(\P^k)$ using all above elements
  \STATE compute descent direction $ \mathbf{D}^k = \Pi_\Delta (\P^k - \mathbf{G}) - \P^k$
\STATE  linesearch on the step-size $\alpha_k$  
\STATE $\P^{k+1} \leftarrow \Pi_\Delta (\P^k + \alpha_k  \mathbf{D}^k$) 
\STATE $k \leftarrow k +1$  
\UNTIL{convergence}
\end{algorithmic}
\end{algorithm}

\begin{algorithm}[t]
\caption{Sinkhorn-Knopp algorithm with automatic differentiation \label{alg:skad}}
  \begin{algorithmic}[1]
\REQUIRE $\K = e^{-\lambda \M_{\P\X^c,\P\X^{c^\prime}}}$, $L$ the number of iterations
\STATE Initialize $k = 0$, $\u^0 =  \mathbf{1}_n$, $\frac{\partial \u_{j}^0}{ \partial \P} = \mathbf{0}$ for all $j$
\STATE compute $\frac{\partial \K_{i,j}}{ \partial \P}$ \COMMENT{store these gradients for computing $\frac{\partial \T^k}{\partial \P}$}
\FOR { $k = 1$ to  $L$}
\STATE compute $\mathbf{v^k}$ and $\mathbf{u^k}$ as given in Equation (\ref{eq:fixedpoint})
\STATE compute $\frac{\partial \v_{j}^k}{ \partial \P}$ for all $j$ \COMMENT{store these gradients for computing $\frac{\partial \T^k}{\partial \P}$}
\STATE compute $\frac{\partial \u_{j}^k}{ \partial \P}$ for all $j$ \COMMENT{store these gradients for computing $\frac{\partial \T^k}{\partial \P}$}

\ENDFOR
\OUTPUT $\u^k$, $\v$ and all the gradients
\end{algorithmic}
\end{algorithm}

}

\section{Discussions}
\label{sec:discussions}

\subsection{Wasserstein Discriminant analysis : local and global.}

As we have stated, WDA allows construction of both
a local and global interaction of the empirical distributions
to compare. Globality naturally results  from the Wasserstein
distance, which is a metric on probability measures, and as 
such it measures discrepancy between distributions at whole
level. Note however that this property would have been shared
by any other metric on probability measures.
Locality comes as a specific feature of regularized
Wasserstein distance. Indeed as made clear by the
solution in Equation \eqref{eq:solregwda} of the entropy-smoothed optimal transport problem, weights $T_{ij}$ tend to be larger for nearby points with
an exponential decrease with respect to distance between $\P\x_i$ and 
$\P\x_{j'}$.

\subsection{Regularized Wasserstein Distance and Fisher criterion.} 
 Fisher criterion for measuring separability
stands on the ratio of inter-class and intra-class variability of samples. 
However, this intra-class variability can be challenging to evaluate
when information regarding probability distributions come only
through empirical examples.  
Indeed, the classical $(\lambda = \infty)$
Wasserstein distance of a discrete
distribution with itself is $0$, as with any other metrics for empirical
distributions.
Recent result by \citet{Mueller15} also suggests that even splitting
examples from one given class and computing Wasserstein distance
between resulting empirical distributions will result
in arbitrary small distance with high probability.
This is why entropy-regularized Wasserstein distance plays a key role
in our algorithm, as to the best
of our knowledge, no other metrics on empirical distributions
would lead to relevant intra-class measures. 
Indeed, $W_\lambda(\P\X,\P\X) = \langle \P^\top \P, \C \rangle
$
with $\C = \sum_{i,j} T_{i,j} (\x_i - \x_j) (\x_i - \x_j)^T$. 
Hence, since $\lambda < \infty$  ensures that  mass of
a given sample is split among its neighbours by the transport map $\T$, $W_\lambda(\P\X,\P\X)$ is thus non-zero and interestingly, it depends on a weighted
covariance matrix $\C$ which, because it depends on $\T$, will  put
more emphasis on couples of neighbour examples. 

More formally, we can show that minimizing  
$W_\lambda (\P\X,\P\X)$ with respect to $\P$
 induces  a neighbourhood preserving map $\P$. 
This means that if an example $i$ is closer to an
example $j$ than an example $k$ in the original
space, this relation should be preserved in the
projected space. \rev{ This implies that 
$\|\P \x_i - \P \x_j\|_2$ should be smaller than
$\|\P \x_i - \P \x_k\|_2$.
Then,
this neighbourhood preservation can be enforced if  
$K_{i,j} > K_{i,k}$, which is equivalent to $M_{i,j} < M_{i,k}$, implies $T_{i,j} > T_{i,k}$. Hence,  since 
$W_\lambda (\P\X,\P\X) = \sum_{i,j} T_{i,j} \|\P \x_i - \P \x_j\|_2^2$,
the inequality  $T_{i,j} > T_{i,k}$  means that examples that
are close in the input space are encouraged to be close
in the projected space. }
We show next that there exists situation in which
this condition is guaranteed. 
\begin{proposition} \label{prop:locality}Given $\T$ the solution of 
the entropy-smoothed optimal transport problem, as
defined in Eq. (\ref{eq:primal_sol}),
between the empirical distribution $\P\X$ on itself, $\forall i,j,k$:
\begin{equation*}
  \exists \alpha\geq 1 , K_{i,j}> \alpha K_{i,k}\quad \Rightarrow
  \quad T_{i,j} > T_{i,k}
\end{equation*}
\end{proposition}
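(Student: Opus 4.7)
The natural starting point is the factorization recalled in Equation~\eqref{eq:solregwda}, which gives $T_{i,j} = u_i K_{i,j} v_j$ for all $i,j$. Since $u_i > 0$, the inequality $T_{i,j} > T_{i,k}$ reduces to $K_{i,j}\, v_j > K_{i,k}\, v_k$, i.e.
\[
\frac{K_{i,j}}{K_{i,k}} > \frac{v_k}{v_j}.
\]
So it suffices to find a single $\alpha \geq 1$ (depending only on $\v$, not on $i,j,k$) that upper-bounds all ratios $v_k/v_j$ simultaneously. The obvious choice is $\alpha := (\max_\ell v_\ell)/(\min_\ell v_\ell)$, which is $\geq 1$ by construction.

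The step that requires some care is to link the two scaling vectors. Because we are transporting $\P\X$ onto itself, the cost matrix $\M_{\P\X,\P\X}$ is symmetric, hence so is $\K = e^{-\lambda \M}$. The marginals imposed in $U_{nn}$ are also identical ($\one_n/n$ on both sides). The entropy-regularized objective in~\eqref{eq:primal_sol} is strictly convex and the feasible set is invariant under $\T \mapsto \T^\top$, so the unique minimizer satisfies $\T = \T^\top$. Plugging this into $\T = \diag(\u)\K\diag(\v)$ and using that $K_{i,j} > 0$ gives $u_i v_j = v_i u_j$ for all $i,j$, so $u_i / v_i$ is constant. The scalar ambiguity in Sinkhorn's factorization lets us rescale so that $\u = \v$; the ratio $v_k/v_j$ in the previous paragraph is then well-defined intrinsically.

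Putting these pieces together: with $\alpha = (\max_\ell v_\ell)/(\min_\ell v_\ell) \geq 1$, any pair $(j,k)$ with $K_{i,j} > \alpha K_{i,k}$ satisfies
\[
\frac{K_{i,j}}{K_{i,k}} > \alpha \geq \frac{v_k}{v_j},
\]
hence $K_{i,j} v_j > K_{i,k} v_k$, and multiplying through by $u_i > 0$ yields $T_{i,j} > T_{i,k}$, as claimed.

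I expect the only real obstacle to be the justification of $\u = \v$, since this is where one must be careful about the nonuniqueness of the Sinkhorn scalings and invoke both the symmetry of $\K$ and the strict convexity of the entropic objective to get a symmetric optimizer. The remaining algebra is bookkeeping, and the resulting $\alpha$ is an explicit, instance-dependent constant. One could, if desired, give a slightly sharper $\alpha$ by taking the maximum only over pairs $(j,k)$ for which $K_{i,j} > K_{i,k}$, but the cruder uniform bound already suffices for the stated implication.
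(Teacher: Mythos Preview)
Your proof is correct and follows the same route as the paper: symmetry of $\K$ and strict convexity force $\T=\T^\top$, hence a factorization $T_{i,j}=v_iK_{i,j}v_j$, after which a uniform bound on the ratios $v_k/v_j$ gives the constant $\alpha$. The only difference is your choice $\alpha=(\max_\ell v_\ell)/(\min_\ell v_\ell)$, whereas the paper takes $\alpha=1/\min_\ell v_\ell$ after first proving (their Lemma~2, using $K_{i,i}=1$) that $v_\ell\le 1$ for all $\ell$; your choice is slightly sharper and sidesteps that lemma entirely, while your symmetry argument for $\T$ is exactly their Lemma~1.
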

\begin{proof}
Because, we are interested in transporting $\P\X$ on $\P\X$,
the resulting matrix $\K$ is symmetric non-negative. Then,  according to
Lemma 1 (see appendix), the solution matrix $\T$ is also symmetric. Owing to
this symmetry and the properties of the Sinkhorn-Knopp algorithm, it
can be shown \citep{knight2008sinkhorn} that there exists a
non-negative vector $\v$ such that 
$T_{i,j} = K_{i,j} v_i v_j$. In addition, because this vector
is the limit of a convergent sequence obtained by the Sinkhorn-Knopp
algorithm, it is bounded. Let us denoted as $A$, a constant such
that $\forall i, v_i \geq A$. Furthermore, in Lemma 2 (see appendix), we show that
$\forall i, 1 \geq v_i$.   Now, it is easy to prove that
$$
A > \frac{K_{i,k}}{K_{i,j}} \implies v_j \frac{K_{i,k}}{K_{i,j}}
\implies K_{i,j} v_j - K_{i,k} > 0
$$  
and because $\forall j,\, v_j \leq 1$ and by definition 
$T_{i,j} =  v_i K_{i,j} v_j$, we thus have
$
K_{i,j} v_j - K_{i,k}v_k > 0  \implies T_{i,j} > T_{i,k}
$. in Proposition \ref{prop:locality} we take
$\alpha=\frac{1}{A}$ since $A<1$.
\end{proof}

Note that this proposition provides us with a guarantee 
on a  ratio $\frac{K_{i,k}}{K_{i,j}}$ between examples that induces 
preservation of neighbourhood. However, the constant 
$A$ we exhibit here is probably loose and thus,
a larger ratio may still preserve locality.

\subsection{Connection to Fisher Discriminant Analysis.} 
\label{sec:relat-fish-discr}
We show next that WDA encompasses Fisher Discriminant
analysis in the limit case where
$\lambda$ approaches $0$. In this case, we can see from Eq. \eqref{eq:primal_sol}
that the  matrix $\T$ does not depends on the data. The solution $\T$
for each Wasserstein distance is the matrix that
maximizes entropy, namely the 
uniform probability distribution $\T=\frac{1}{nm}\one_{n,m}$.
The cross-covariance matrices become thus
$$
\C^{c,c'}=\frac{1}{n_cn_{c'}}\sum_{i,j}(
  \x^c_i-\x^{c'}_j)( \x^c_i-\x^{c'}_j)^T
$$
and the matrices $\C_w$ and $\C_b$ correspond then to intra- and inter-class
covariances as used in FDA. Since these matrices do not depend
on $\P$, the optimization problem \eqref{eq:wdaprob} boils down to the
usual Rayleigh quotient which can be solved using a generalized
eigendecomposition of $\C_w^{-1}\C_b$ as in FDA. Note that WDA is equivalent to FDA when the classes are balanced (in the unbalanced case one needs to weight the covariances/Wasserstein distances in (\ref{eq:wdaprob}) with the class ratios).
Again, we stress out that beyond this limit case and when $\lambda>0$,
the smoothed optimal transport matrix $\T$ promotes  cross-covariance
matrices that are estimated from local relations as illustrated in Figure
\ref{fig:regwass}.

\rev{Following this connection, we want to stress again the role played
by the within-class Wasserstein distances in WDA. At first, from a theoretical
point of view, optimizing the ratio instead of just maximizing the
between-class distance allows us to encompass well-known method such as FDA.
Secondly, as we have shown in the previous subsection, minimizing
the within-class distance provides interesting features such as 
neighbourhood preservation under mild condition.

Another intuitive benefit of minimizing the within-class distance is the following. Suppose we have several projection maps that lead to the same optimal transport matrix $\T$. Since
$W_\lambda (\P\X,\P\X) = \sum_{i,j} T_{i,j} \|\P \x_i - \P \x_j\|_2^2$ for any
$\P$, minimizing $W_\lambda (\P\X,\P\X)$ with respect to $\P$ means
preferring the projection map that yields to the smaller weighted (according
to $\T$) pairwise distance of samples in the projected space. Since
for an example $i$, $\{T_{i,j}\}_j$ are mainly non-zero among
neighbours of $i$, minimizing  the within-class distance 
favours projection maps that tend to tighly cluster points in
the same class.   
 }

\subsection{Relation to other information-theoretic  discriminant analysis.}
Several information-theoretic criteria have been considered 
for discriminant analysis and dimensionality reduction. Compared
to Fisher's criteria, these ones have the advantage of going
beyond a simple sketching of the data pdf based on second-order
statistics. Two recent approaches are based on the idea
of maximizing distance of probability distributions of data
in the projection subspaces. They just differ in the
choice of the metrics of pdf (one being a $L_2$ distance \citep{emigh2015linear}
and the second one being a Wasserstein distance \citep{Mueller15}).
While our approach also seeks at finding projection that maximizes
pdf distance, it has also the unique feature of finding projections
that preserves neighbourhood.
Other recent approaches have addressed the problem of supervised
dimensionality reduction algorithms still from an information
theoretic learning perspective
but without directly maximizing distance of pdf in the projected
subpaces. 
We discuss two methods to which we have compared
with in the experimental analysis.  The approach of \citet{suzuki2013sufficient}, denoted as LSDR, 
seeks at finding a low-rank subspace of inputs that contains
sufficient information for predicting output values. In their works,
the authors
define the notion of sufficiency through conditional independence
of the outputs and the inputs given the projected inputs and evaluate
this measure through squared-loss mutual information. One major
drawback of their
approach is that they need to estimate a density ratio introducing
thus an extra layer of complexity and an error-prone task. 
Similar idea has been investigated by \citet{tangkaratt2015direct}
as they used quadratic mutual information for evaluating statistical dependence
between projected inputs and outputs (the method has been named
LSQMI). While they avoid the estimation of
density ratio, they still need to estimate derivatives of quadratic
mutual information.
Like our approach, the method of  \citet{giraldo2013information} avoids density estimation
 for performing supervised metric learning. Indeed, the
key aspect of their work is to show that the Gram matrix of some data samples  
can be related to some information theoretic quantities such as
conditional entropy without the need of estimating pdfs. Based
on this finding, they introduced a metric learning approach, coined CEML, by
minimizing conditional entropy between labels and projected
samples. While their approach is appealing, we believe that 
a direct criterion such as Fisher's is more relevant and robust 
for classification purposes, as proved in our experiments.

\subsection{Wasserstein distances and machine learning. }
Wasserstein distances are mainly derived from the theory of optimal transport~\citep{villani08}, and provide a useful way to compare
probability measures. Its practical deployment in machine learning problems has been alleviated thanks to regularized versions
of the original problem
\citep{CuturiSinkhorn,benamou2015iterative}. The geometry of the space
of probability measures endowed with the 
Wasserstein metric allows to consider various objects of interest such
as means or barycenters~\citep{Cuturi14,benamou2015iterative}, and has
led to 
generalization of PCA in the space of probability
measures~\citep{Seguy15}. It has been considered in the problem of
semi-supervised learning~\citep{Solomon14},
domain adaptation~\citep{courty2016optimal}, or definition of loss
functions~\citep{Frogner15}. More recently, it has also been considered
in a subspace identification problem
for analyzing the differences between distributions~\citep{Mueller15},
but contrary to our approach, they only consider projections to
univariate distributions,
and as such do not permit to find subspaces with dimension $>1$.
{More recent works have proposed to use Wasserstein for measuring
  similarity between documents in \citet{huang2016supervised} and
  propose to learn a metric that encodes class information between
  samples. Note that in our work we use Wasserstein between the
  empirical distributions and not the training samples yielding a very
  different approach.

\begin{figure*}[!t]
  \centering
  \includegraphics[width=.7\linewidth]{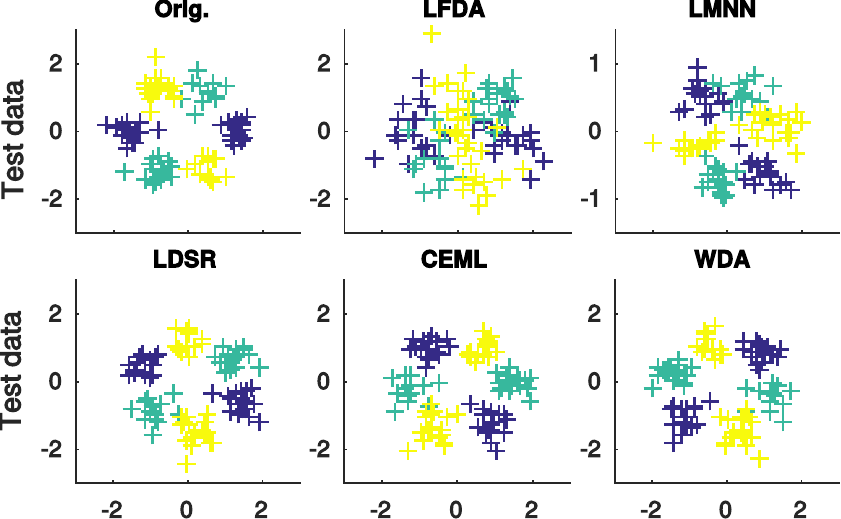}  \caption{Illustration of subspace learning methods on a nonlinearly
    separable 3-class toy example of dimension $d=10$ with 2 discriminant
features (shown on the upper left) and 8 Gaussian noise features. 
Projections onto $p=2$ of the test data  are
reported for several subspace estimation methods.}
  \label{fig:illuswda}
\end{figure*}

\section{Numerical experiments}
\label{sec:numer-exper}

In this section we illustrate how WDA works on several learning problems.
 First, we evaluate our approach on a simple simulated dataset
with a 2-dimensional discriminative
subspace. Then, we benchmark WDA on MNIST
and Caltech datasets {with some pre-defined hyperparameter settings for
methods having some}.
\rev{Unless specified and justified, for LFDA and LMNN, we have set the number of neighours to $5$.
For CEML, Gaussian kernel width $\sigma$ has been fixed to 
$\sqrt{p}$, which is the value used by  \citet{giraldo2013information} 
across all their experiments. For WDA, we have chosen $\lambda = 0.01$
except for the toy problem.}
The final experiment compares performance of WDA and
competitors on some UCI dataset problems, in which relevant
parameters have been validated.

Note that in the spirit of reproducible research  the
 code will be made available to the community and the Python implementation of
 WDA is available as part of the POT for Python Optimal Transport Toolbox \citep{flamary2017pot}  on Github\footnote{Code : \url{https://github.com/rflamary/POT/blob/master/ot/dr.py} }.

\subsection{Practical implementation.}

In order to make the method less sensitive to the
dimension and scaling of the data, we propose to use a
pre-computed adaptive regularization parameter for
each Wasserstein distances in \eqref{eq:wdaprob}. Denote
as $\lambda_{c,c'}$ such parameter yielding thus to a distance $W_{\lambda_{c,c'}}$.
 In practice, we
initialize $\P$ with the PCA projection,  and 
define $\lambda_{c,c'}$ as $\lambda_{c,c'}=\lambda(\frac{1}{n_cn_{c'}}\sum_{i,j}\|\P x_i^c-\P
x_j^{c'}\|^2)^{-1}$ between class $c$ and $c'$. These values are
computed \emph{a priori} and fixed in the remaining iterations.
They have the advantage to promote a similar regularization strength between inter and
intra-class distances.

We have compared our WDA algorithms to some classical dimensionality
reduction algorithms like PCA and FDA, to some locality preserving
methods such as LFDA and LMNN and to some recent mutual information-based
supervised dimensionality and metric learning algorithms such
as LSDR and CEML  mentioned above. For the last
three methods, we have used the author's implementations. We have
also considered LSQMI as a competitor but did not report its performances
as they were always worse than those of LSDR.

\subsection{Simulated dataset.}
\label{sec:simulated-dataset}

\begin{figure*}[t]
  \centering
  \includegraphics[width=.47\linewidth]{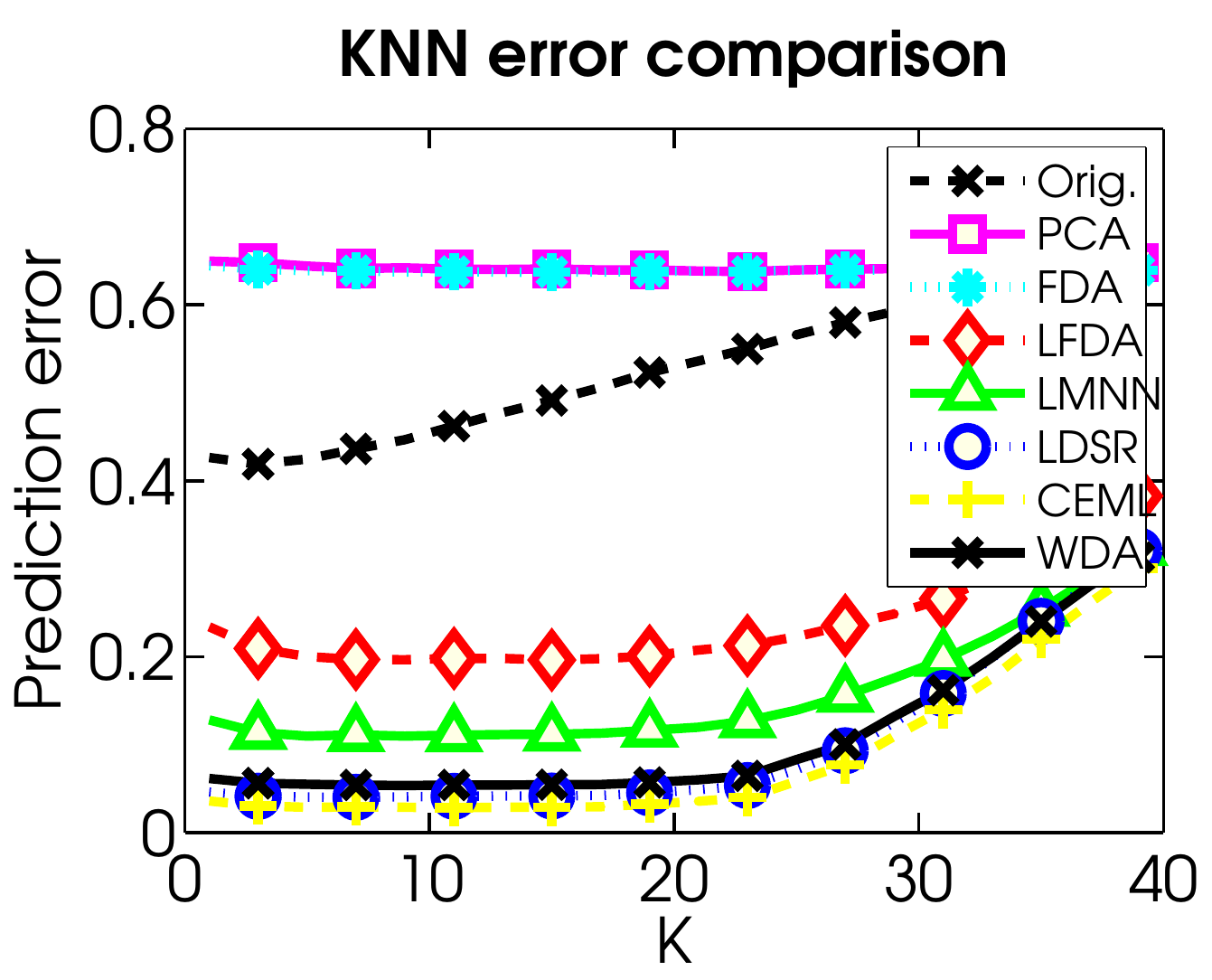}
  \includegraphics[width=.47\linewidth]{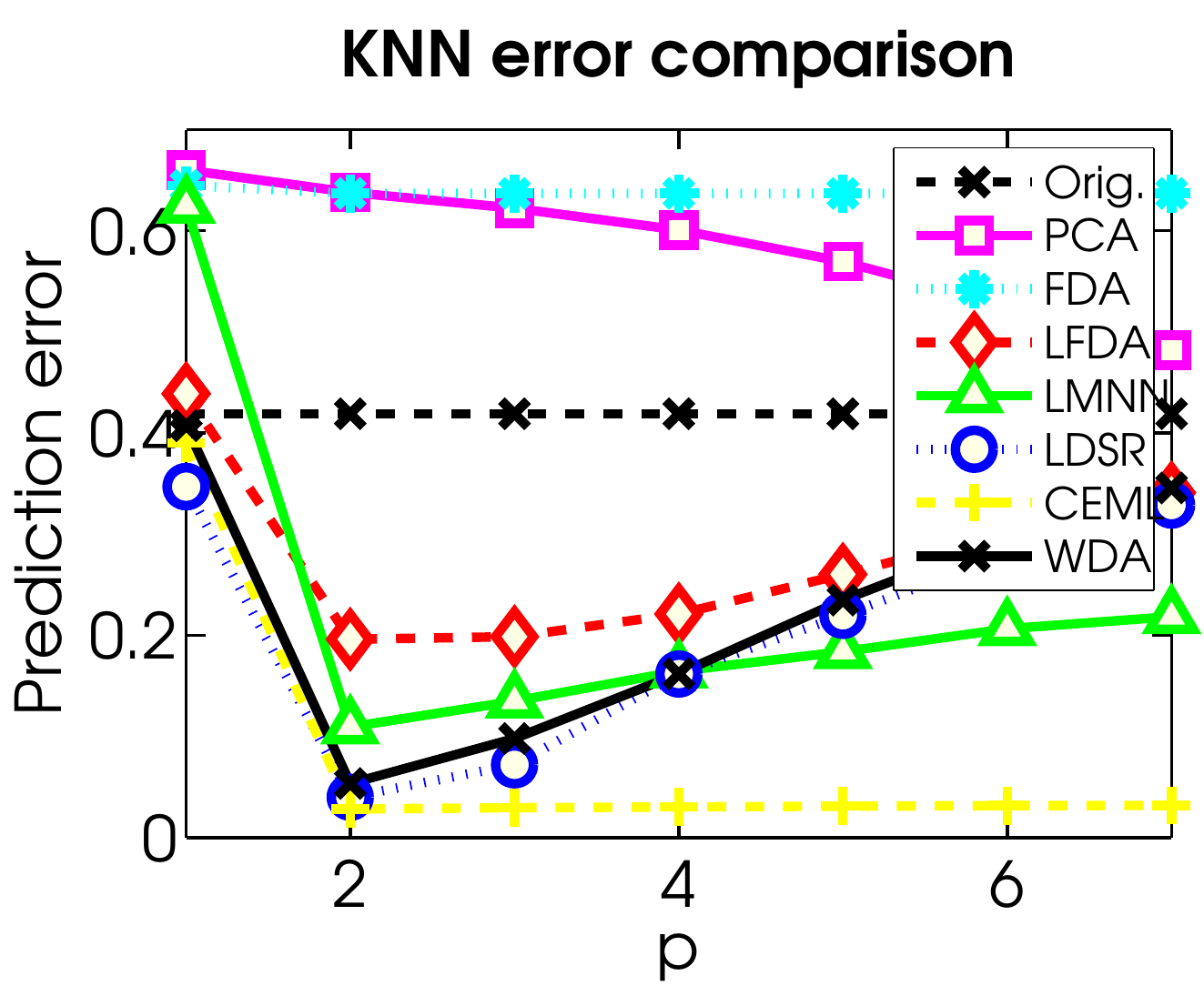}

  \caption{Prediction error on the simulated dataset (left) with projection dimension fixed to  $p=2$ and error for varying $K$ in the KNN classifier.
    (right) evolution of performance with different projection 
dimension $p$ and best $K$ in the KNN classifier.}
  \label{fig:toyerror}
\end{figure*}
\begin{figure}[t]
\centering
\includegraphics[width=.47\linewidth]{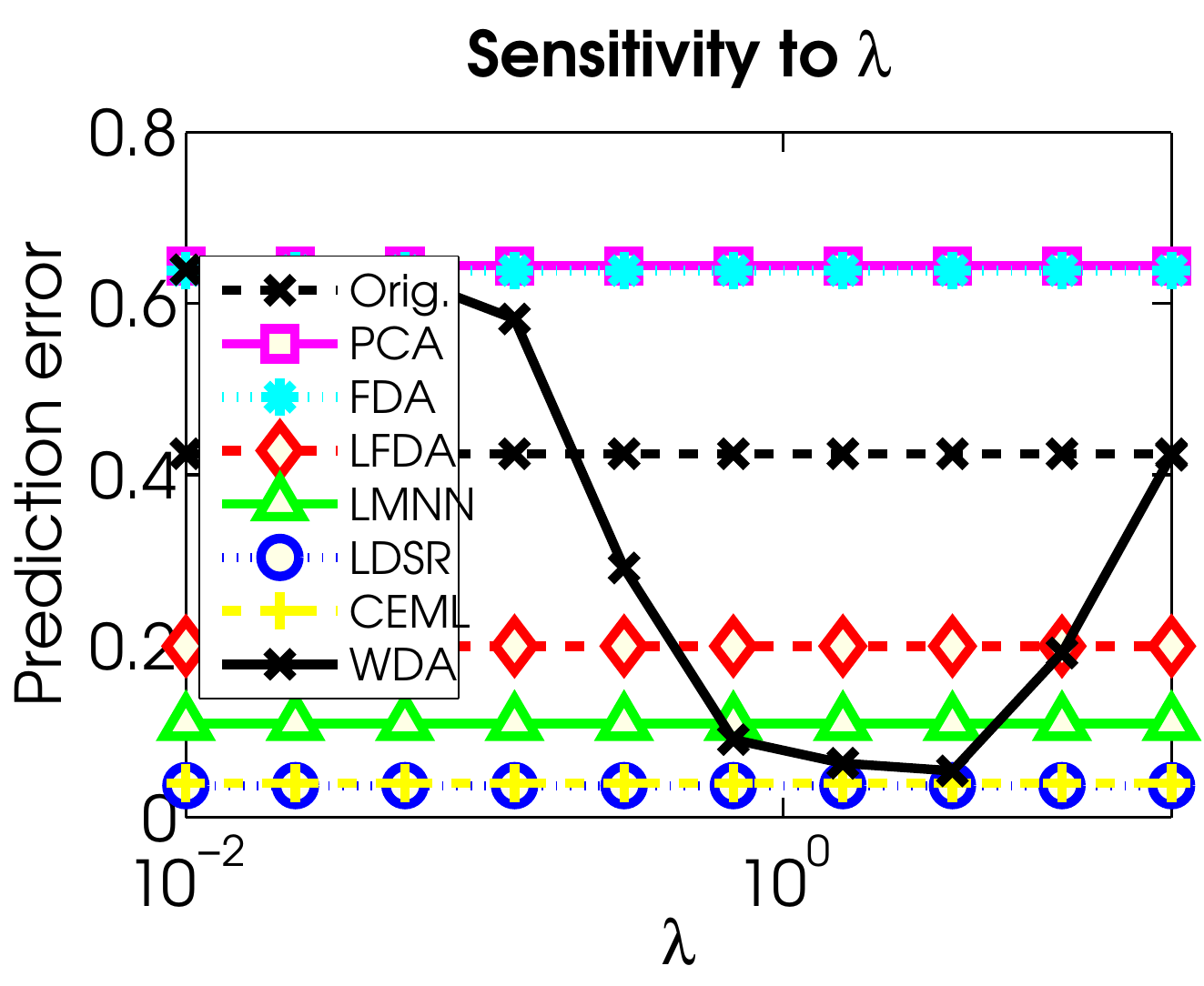}    
\includegraphics[width=.47\linewidth]{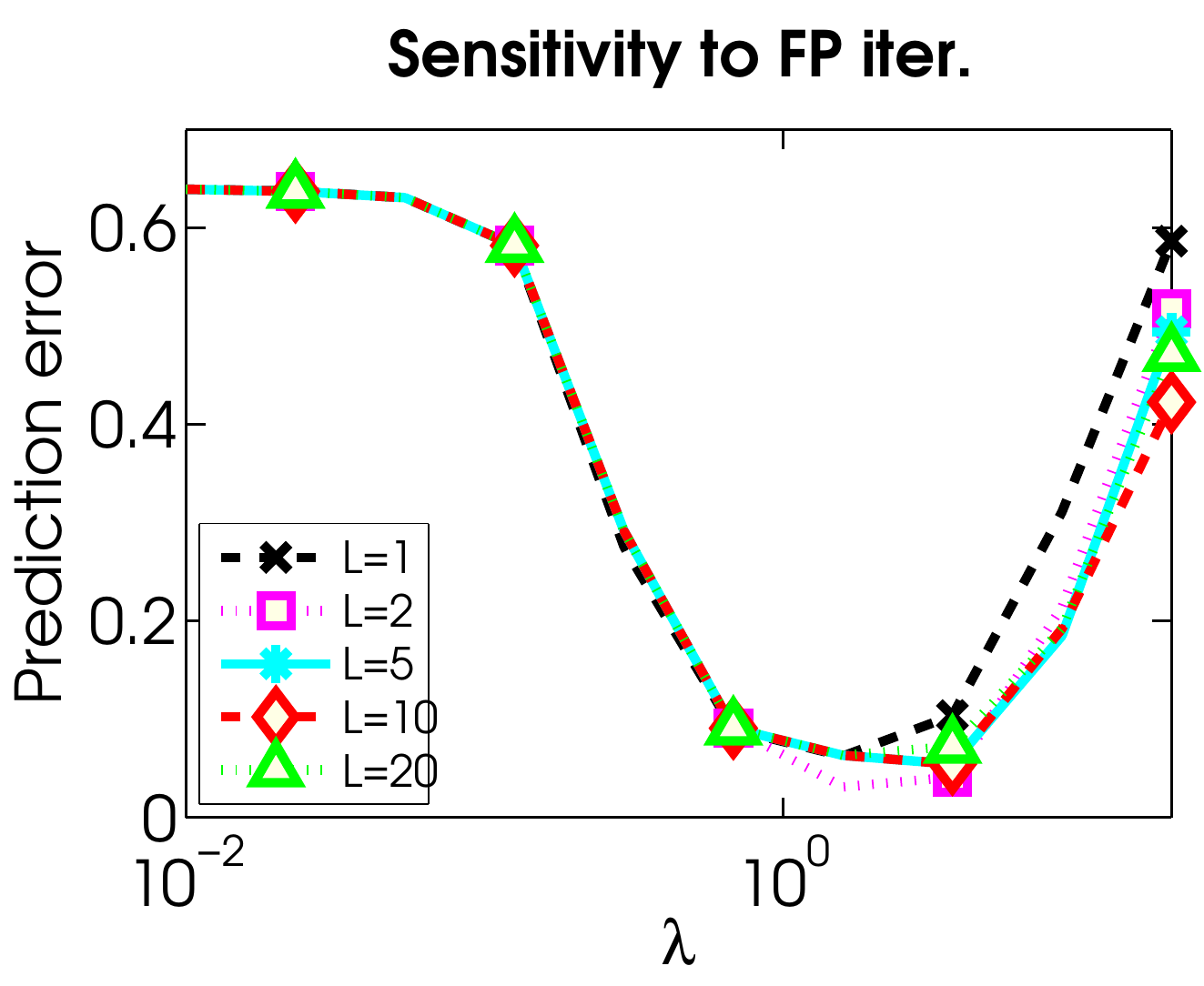}
\caption{ Comparison of WDA performances on the simulated dataset (left) as a function of $\lambda$. (right) as a
  function of $\lambda$ and with different  number of fixed point iterations.}
  \label{fig:toyerror2}
\end{figure}

This dataset has been designed for evaluating the ability of a subspace
method to uncover a discrimative linear subspace when the classes are non-linearly separable. It is a 3-class problem in dimension $d=10$ with
two discriminative  dimensions, the remaining $8$ containing
Gaussian noise. In the $2$ discriminant features, each class is composed
of  two modes as illustrated  in the upper left part of Figure \ref{fig:illuswda}. 

Figure \ref{fig:illuswda} also illustrates the projection 
of test samples in two-dimensional subspaces obtained
from the different approaches.  We can see that for
this dataset WDA, LDSR and CEML lead to a good discriminant
subspace. This illustrate the importance of estimating relations
between samples in the projected space as opposed to the original
space as done in LMNN and LFDA.
Quantitative results are illustrated in Figure \ref{fig:toyerror} (left) where we
reported prediction error for a
K-Nearest-Neighbors classifier (KNN) for $n=100$ training examples and
$n_t=5000$ test examples.  In this
simulation, all prediction errors are averaged over 20 data
generations and the neighbors parameters of LMNN and LFDA have been
selected empirically to maximize performances (respectively $5$ for LMNN and
$1$ for LFDA).  We can see in the left part of the figure that WDA,
LDSR and CEML and to a lesser extent LMNN can estimate the relevant subspace, 
\rev{when the optimal dimension value is given to them},that is robust to the choice of $K$. Note that slightly better performances are achieved by LSDR and CEML. \rev{In the right plot of Figure \ref{fig:toyerror}, we show the performances of all algorithms when varying the dimension of the projected space. 
We note that WDA, LMNN, LSDR and LFDA  achieve their best performances for 
$p=2$ and that prediction errors rapidly increase as $p$ is misspecified. 
Instead, CEML performs very well for $p \geq 2$. Being sensitive to the
correct projected space dimensionality can be considered as an asset, as typically this dimension is to be optimized (\emph{e.g} by cross-validation), making it easier to spot the best dimension reduction. At the contrary, CEML
is robust to projected space dimensionality mis-specification at the expense of 
under-estimating the best reduction of dimension. }

In the left plot for Figure \ref{fig:toyerror2} , we illustrate the
sensitivity of WDA \emph{w.r.t.} the regularization parameter
$\lambda$. WDA returns equivalently good performance on almost a full order of
magnitude of $\lambda$. This suggests that a coarse validation can be performed in practice.  
The right panel of Figure \ref{fig:toyerror2}  shows the performance of the WDA  for different number of inner Sinkhorn iterations $L$. We can see that
even if this parameter leads to different performances for large values
of $\lambda$, it is still possible
find some $\lambda$ that yield near best performance even for small
value of $L$.

\begin{figure*}[t]
  \centering
  \includegraphics[width=.47\linewidth]{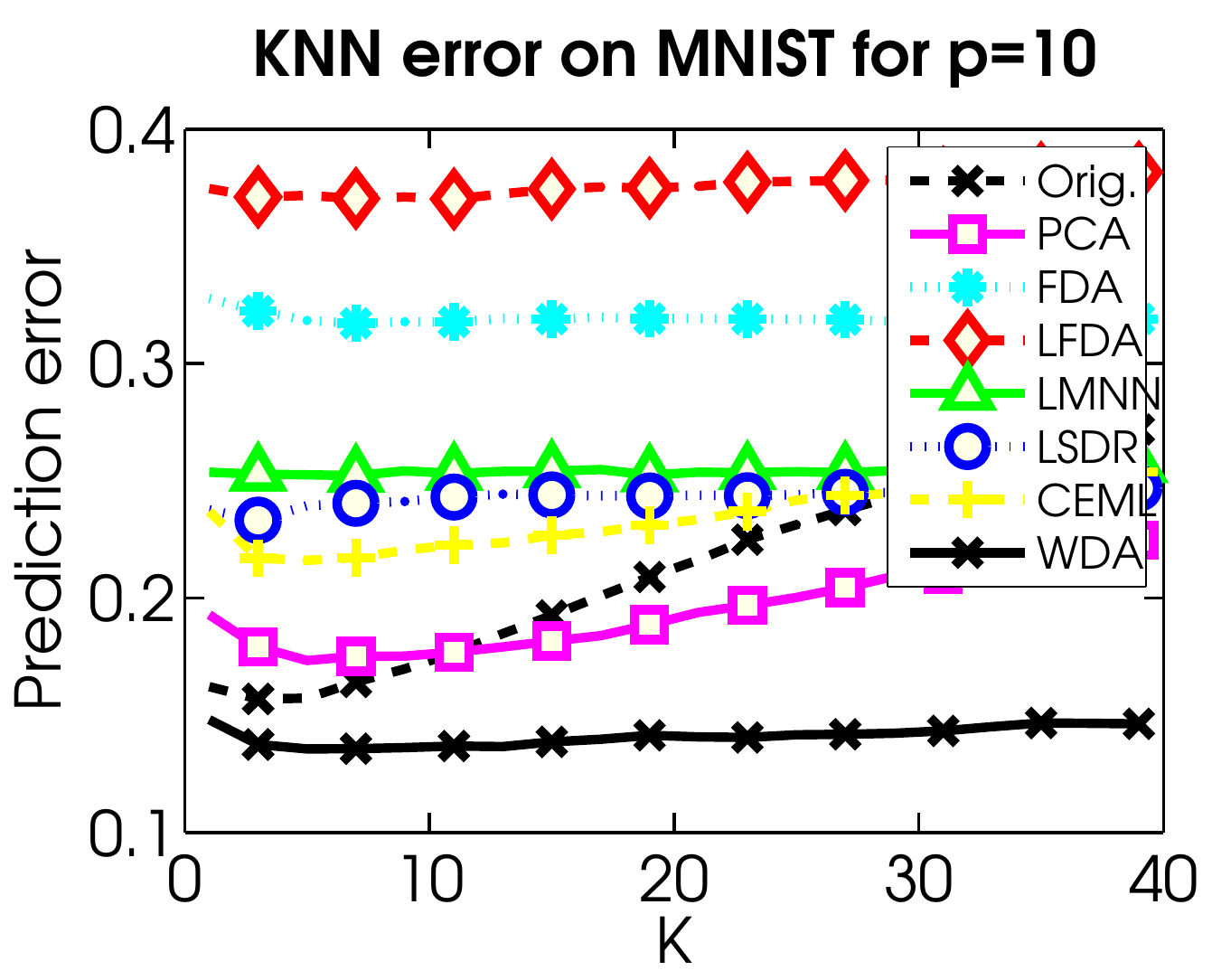}
  \includegraphics[width=.47\linewidth]{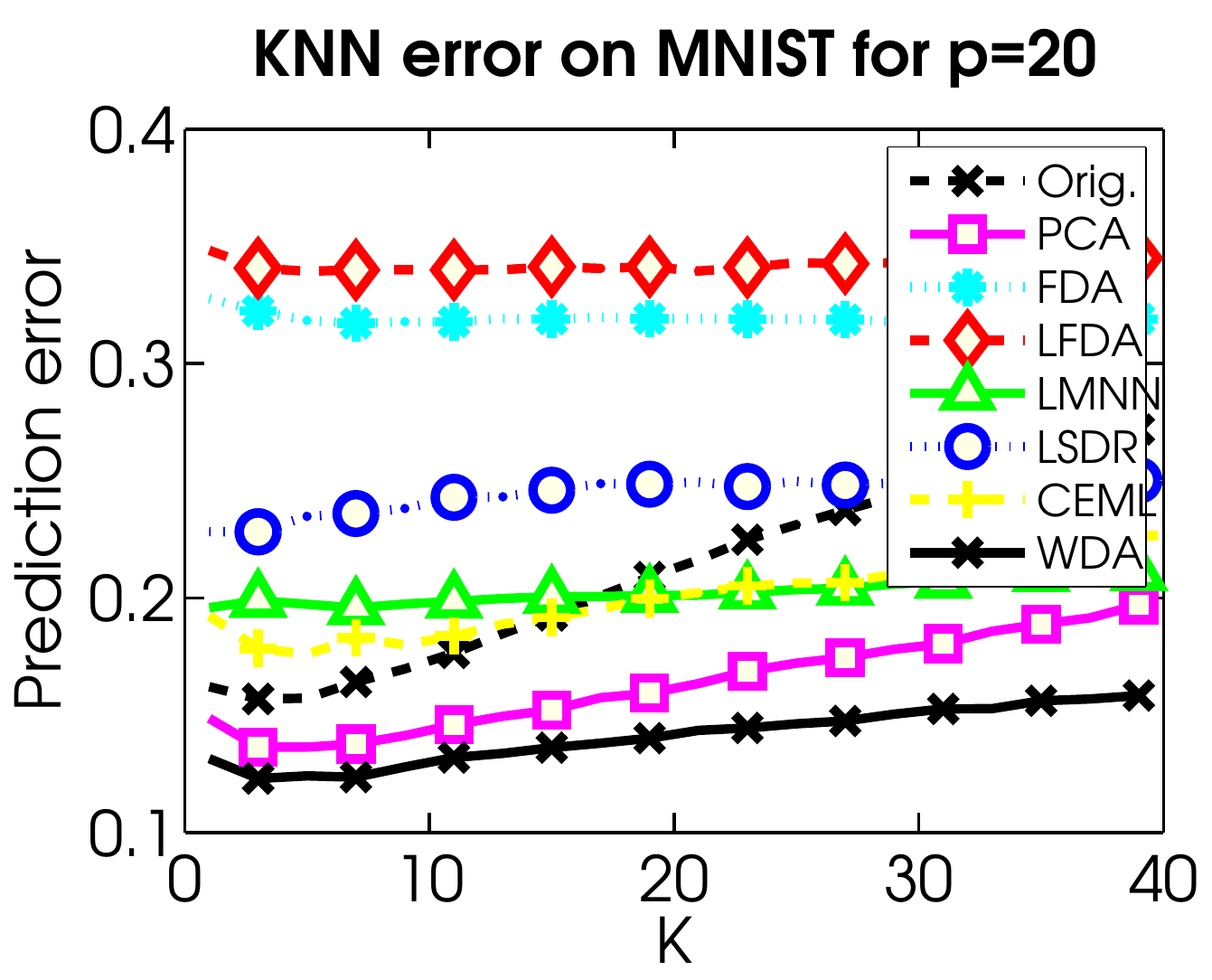}
    \caption{  Averaged prediction error on MNIST with projection dimension 
(left) $p=10$. (right) $p=20$.~ In these plots, LSQMID has
been omitted due to poor performances.}
  \label{fig:vision}
\end{figure*}

\begin{figure*}[t]
  \centering
  \includegraphics[width=.5\linewidth]{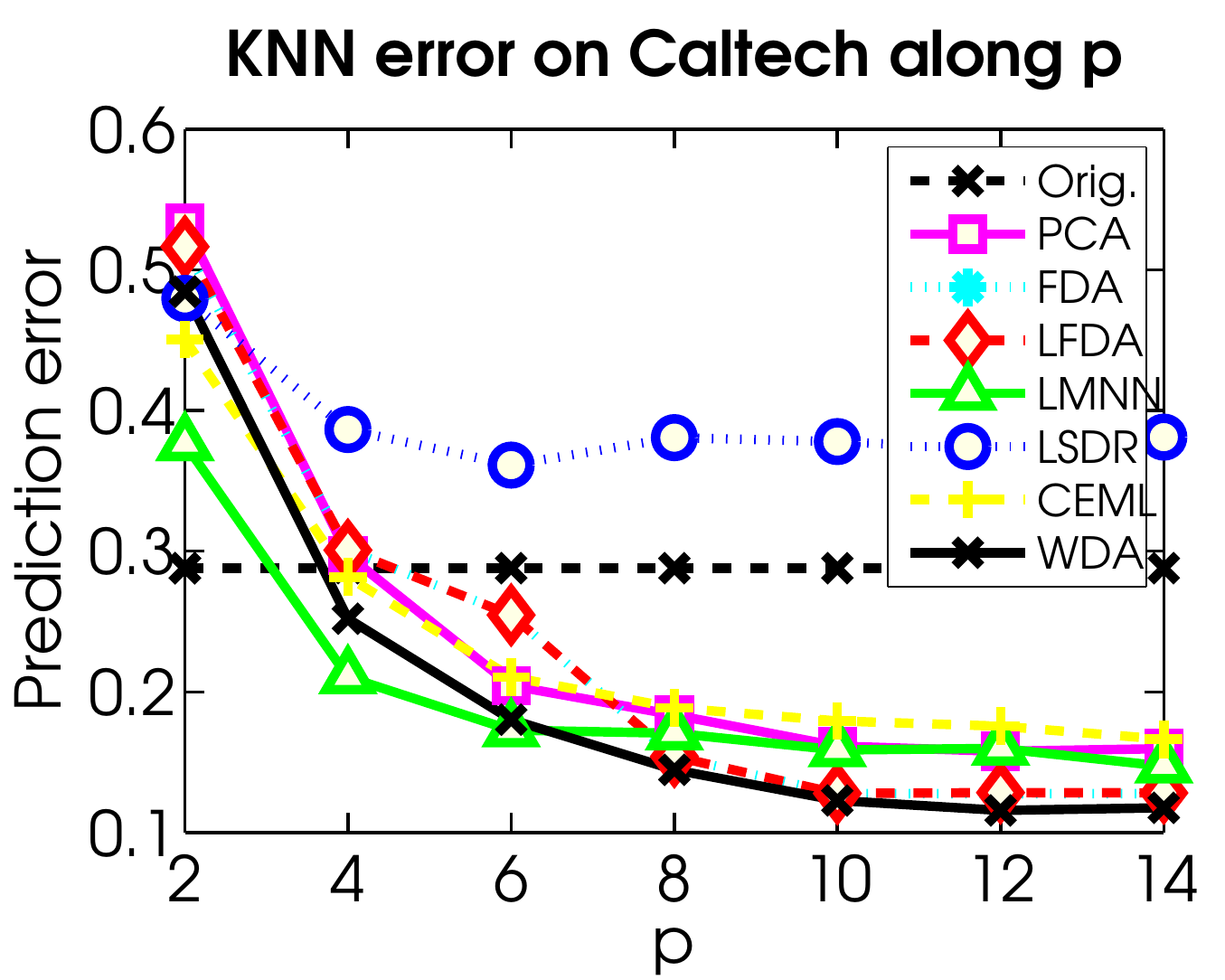}
   \caption{ Averaged prediction error on the Caltech dataset along the
    projection dimension. In these plots, LSQMID has
been omitted due to poor performances.}
  \label{fig:vision2}
\end{figure*}

\begin{figure*}[t]
  \centering
  \includegraphics[width=.9\linewidth]{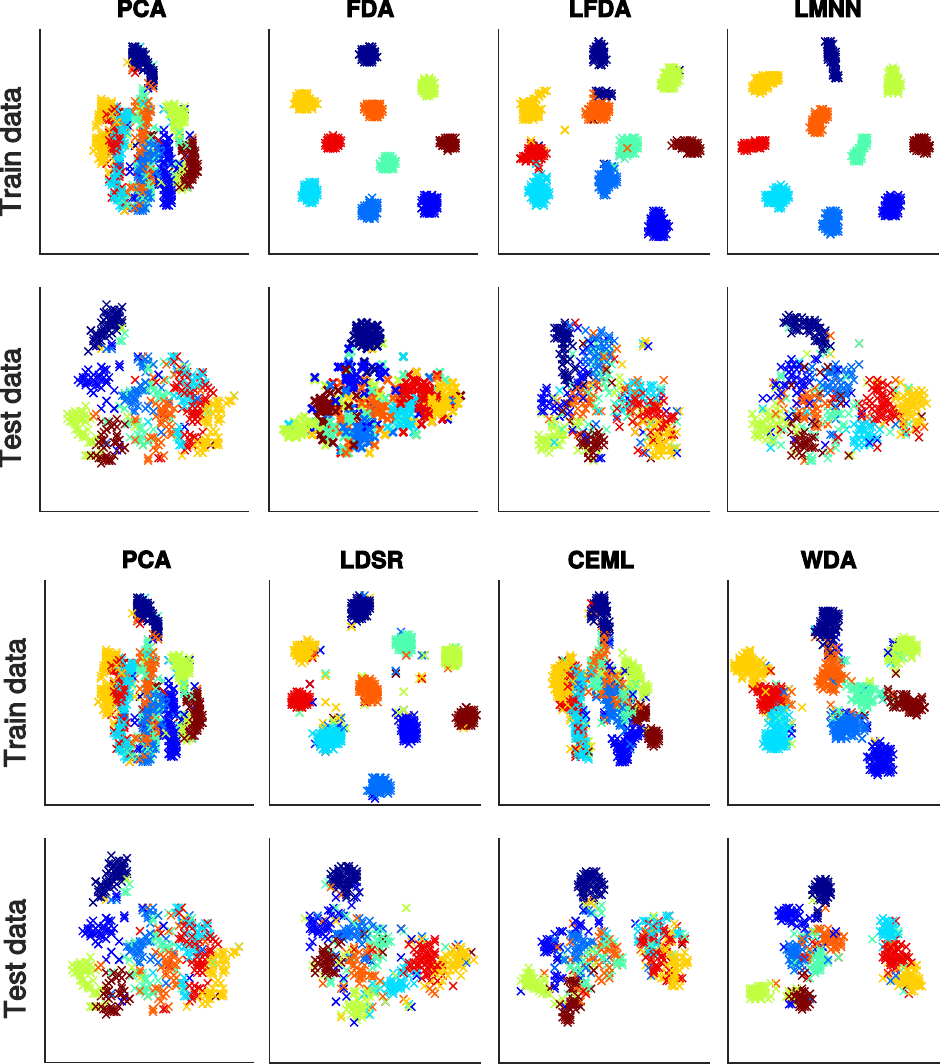}
  \caption{ 2D tSNE of the  MNIST samples projected on $p=10$ for  different
     approaches. (first and third lines) training set (
     second and fourth lines) test set. }
  \label{fig:mnist2}
\end{figure*}

\subsection{MNIST dataset.} 
\label{sec:mnist-dataset}

Our objective with this experiment is to measure how robust our
approach is with only few
training samples despite high-dimensionality of the problem. 
To this end, we draw $n=1000$ samples for training and report the KNN
prediction error as a function of $k$ for the different subspace
methods when projecting onto 
$p=10$ and $p=20$ dimensions (resp. left and right plots of
Figure \ref{fig:vision}). The reported scores are averages of 20
realizations of the same experiment. We also limit the analysis to
$L=10$ as the number of Sinkhorn fixed point iterations and $\lambda=0.01$. 
For both $p$, WDA finds a better 
subspace than the original space which suggests that most of the
discriminant information available in the training dataset has been
correctly extracted. Conversely,  the other approaches struggle to
find a relevant subspace in this
configuration. In addition to better
prediction performance, we want to emphasize that in this configuration,
WDA  leads to a dramatic compression of the data from
784 to 10 or 20 features while preserving  most of the discriminative
information.

To gain a better understanding of the corresponding embedding, we
have further projected the data from the 10-dimensional space to 
a 2-dimensional one using t-SNE~\cite{van2008}. In order
to make  the embeddings comparable, we have used the same
initializations of t-SNE for all methods.
 The resulting  2D projections on the test
samples are
shown in Figure \ref{fig:mnist2}. We can clearly see the
overfitting behaviour of FDA, LFDA, LMNN and LDSR that separate accurately the
training samples but fail to separate the test samples. Instead,
WDA is able to disentangle classes in the training set while
preserving generalization abilities.

\subsection{Caltech dataset.}
 In this experiment, we use a subset described by~\citet{donahue2014}
 of the Caltech-256 image
 collection~\cite{griffin07}. The
dataset uses  
features that are the output of the DeCAF deep learning
architecture~\cite{donahue2014}. More
precisely, they are extracted as
the sparse activation of the neurons from the 6th fully connected
layer of a convolutional network trained on ImageNet and then
fine-tuned for the considered visual recognition task. As such, they
form
vectors of 4096 dimensions and we are looking for subspace as
small as $15$. In this setting, $500$ images are
considered for training, and the remaining portion of the dataset for
testing ($623$ images). There are 9 different classes in this
dataset. We examine in this experiment how the proposed dimensionality
reduction performs when changing the subspace dimensionality. For this problem, the regularization
parameter $\lambda$ of WDA was empirically set to $10^{-2}$.
The K in KNN was  set to $3$ which is a common standard setting for this
classifier. The reported results reported in Figure~\ref{fig:vision2} are averaged over 10 realizations of
the same experiment. When $p\geq 5$, WDA already finds a subspace which
gathers relevant discriminative information from the original space. In
this experiment, LMNN yields to a better subspace for small $p$ values
while WDA is the best performing method for $p\geq 6$.  Those results
highlight the potential
interest for using WDA as linear dimensionality reduction layers in
neural-nets architecture. 

\subsection{Running-time}

For the above experiments on MNIST and Caltech, we have also evaluated the running
times of the compared algorithms. The LFDA, LMNN, LDSR and CEML codes are the Matlab code
that have been released by the authors. Our WDA code is Python-based and relies on 
the  POT toolbox \cite{}. All these codes have been runned on a $16$-core Intel Xeon E5-2630
CPU, operating at 2.4 GHz with GNU/Linux and 144 Gb of RAM. 

Running times needed for computing learned subspaces are reported in Table \ref{tab:timing}. We first remark that LSDR
is not scalable. For instance, ot needs  several tenths of hour for computing the projection from  $4096$ to $14$
dimensions on Caltech. More generally, we can note that our WDA algorithm
scales well and is cheaper to compute than LMNN and is far less expensive than
CEML on our machine. We believe our WDA algorithm  better leverages multi-core machines
owing the large amount of matrix-vector multiplications needed for
computing Sinkorhn iterations.
   
\begin{table}[t]
  \centering\footnotesize\setlength{\tabcolsep}{4pt}
\resizebox{.99\columnwidth}{!}{
  \begin{tabular}[h]{lccccccc}
\hline
 Datasets  &   PCA& FDA& LFDA& LMNN& LSDR& CEML& WDA\\\hline \hline
Mnist (10) &  0.39(0.1)& 0.69(0.2)& 0.55(0.4)& 20.55(14.2)& 29813(5048)& 87.02(8.7)& 6.28(0.3)\\\hline
Mnist (20) &  0.38(0.0)& 0.58(0.0)& 0.54(0.2)& 18.27(17.0)& 60147(11176)& 90.22(8.8)& 6.15(0.1)\\\hline
Caltech (14)&  0.53(0.3)& 21.38(6.1)& 11.43(2.0)& 39.56(6.3)& 140776(53036)& 14.59(7.6)& 5.29(0.1)\\\hline
  \end{tabular}}
  \caption{Averaged running time in seconds of the different
    algorithms for computing 
the learned subspaces.}
  \label{tab:timing}
\end{table}

\subsection{UCI datasets.}

 \begin{table}[t]
\setlength{\tabcolsep}{4pt}
   \centering
 
\footnotesize 
  \begin{tabular}[h]{lccccccccc}
\hline
Datasets & Orig.& PCA& FDA& LFDA& LMNN& LSDR& LSQMI& CEML& WDA\\\hline
wines & 24.33& 26.57& 37.87& 29.21& 32.81& 32.81& 46.29& \textbf{15.34}& \underline{16.91}\\\hline
iris & 42.07& 40.60& \textbf{19.27}& 25.13& \underline{21.67}& 37.93& 56.27& \underline{20.87}& \underline{20.87}\\\hline
glass & 54.01& 58.16& 57.45& 59.53& 54.25& 50.85& 65.42& \textbf{34.86}& 45.99\\\hline
vehicles & 58.68& 57.26& \underline{48.57}& \underline{48.25}& \textbf{40.84}& 51.86& 65.09& \underline{48.46}& 51.13\\\hline
credit & 28.90& 25.57& 18.67& \underline{17.69}& 23.73& 24.71& 39.01& \underline{17.65}& \textbf{17.39}\\\hline
ionosphere & 26.14& 26.90& 29.63& 27.64& 30.80& 31.08& 36.42& \underline{22.87}& \textbf{20.40}\\\hline
isolet & 17.50& 17.60& 15.12& 13.96& \textbf{11.13}& 13.33& 21.76& 30.19& 14.41\\\hline
usps & 7.59& 7.66& 11.63& 12.76& \textbf{6.05}& 8.77& 14.83& 10.15& 6.50\\\hline
mnist & 17.26& 14.16& 33.85& 29.92& \underline{13.95}& 26.53& 60.05& 24.68& \textbf{13.07}\\\hline
caltechpca & 23.39& 13.93& 12.03& 18.19& \underline{11.55}& 36.08& 100.00& 13.65& \textbf{11.45}\\\hline
Aver. Rank& 5.4 & 5.5 & 5.2 & 5.2 & 3.4 & 5.7 & 8.9 & 3.5 & 2.2

  \end{tabular}
  \caption{Average test errors over $20$ trials on UCI datasets. 
In bold, the lower test error accross algorithms. Underlined averaged test
errors that are statistically non-significantly different according to
a signrank test with p-val = 0.05. Result of LSQMI on caltech has not been reported due to lack of convergence after few days of computation. \label{ucitable}}

\end{table}

We have also compared the performances of the dimensionality reduction
algorithms on some UCI benchmark datasets \cite{Lichman:2013}. The experimental
setting is similar to the one proposed by the authors of LSQMI
\cite{tangkaratt2015direct}. For these UCI datasets, we have appended the
original input features
with some noise features of dimensionality $100$. 
We have split the examples $50\%-50\%$ in
a training and test set. Hyper-parameters such as the number
of neighbours for the KNN and and the dimensionality of the 
projection has been cross-validated on the training set
and choosed respectively among the values $[1:2:19]$ (in Matlab notation)
and $[5, 10, 15, 20, 25]$. Splits have been performed $20$ times.
Note that we have also added experiments with Isolet, USPS, MNIST and
Caltech datasets under
this validation setting but without the additional noisy features.  
Table \ref{ucitable} presents the performance of competing methods.
We note that  our WDA is more robust than all other methods and
is able to capture relevant information in the learned subspaces. 
Its average ranking on all datasets is $2.2$ while the second best,
LMNN is $3.4$. 
There is only one dataset (\emph{vehicles}) for which WDA performs
significantly worse than top methods. Interestingly LSDR and LSQMI
seem to  be less robust than
LMNN and FDA, against which they have not been compared in the
original paper \citep{tangkaratt2015direct}.

\section{Conclusion}
\label{sec:conclusion}
This work presents the Wasserstein Discriminant Analysis, a new and
original linear discriminant subspace estimation method. Based on the
framework of regularized Wasserstein distances, which measure a global
similarity
between empirical distributions, WDA operates by separating
distributions of different classes in the subspace, while maintaining
a coherent structure at a class level. To this extent, the use of
regularization in the Wasserstein formulation allows to effectively
bridge a gap between a global coherency and the local structure of the
class manifold.  This comes at a cost of a difficult optimization of a
bi-level program, for which we  proposed  an efficient method based on
automatic differentiation of the Sinkhorn algorithm.
Numerical experiments show that the method
performs well on a variety of features, including those obtained with
a deep neural architecture. Future work will consider stochastic
versions of the same approach in order to enhance further the ability
of the method to handle large volume of high-dimensional data.

\bibliographystyle{spbasic}      
   
\appendix

\section{Illustration of the transport $\T^{c,c'}$}
\label{sec:illus}
In this Section, we provide intuition on how the transport $\T^{c,c'}$ between class $c$ and $c'$ behaves in 2D toy problem.
Remind that this matrix plays an essential role on how the covariance matrix
$\C$ is estimated in Equation (6).  

In this example, illustrated in Figure~\ref{fig:visu1}, two bi-modal
Gaussian distributions are sampled to produce two distributions
representing two classes. We illustrate in Figure~\ref{fig:tvisu} the
transport $\T^{1,2}$ (inter-class) and $\{\T^{1,1}\T^{2,2}\}$
(intra-class). The corresponding transport matrices are either
displayed in matrix form as inserts, or as connections between the
samples. Those connections have a width parametrized by the magnitude
of the connection (i.e. a small $t_{i,j}$ value will be displayed as a
very thin connection). We note that for visualization purpose, the
magnitude of the $\T$ elements displayed in matrix form are normalized
by the the largest magnitude in the matrix. The transport maps can be
observed in Figure~\ref{fig:tvisu} for three different values of the
$\lambda$ parameter ($\lambda=1,0.5,0.1$). One can notice the locality
induced by large values of $\lambda$, which allows to concentrate the
connections on specific modes of the distributions. When $\lambda$ is
smaller, inter-modes connections start to appear, which allows to
consider the data distributions at a larger scale when computing $\C$. Regarding the
inter-class transport $\T^{1,2}$ , one can also observe the specific
relations induced by the optimal transport maps, that do not
associate modes together, but rather dispatch one mode of each class
onto the two modes of the other.

\begin{figure}[pt]
  \centering
  \includegraphics[width=0.5\columnwidth]{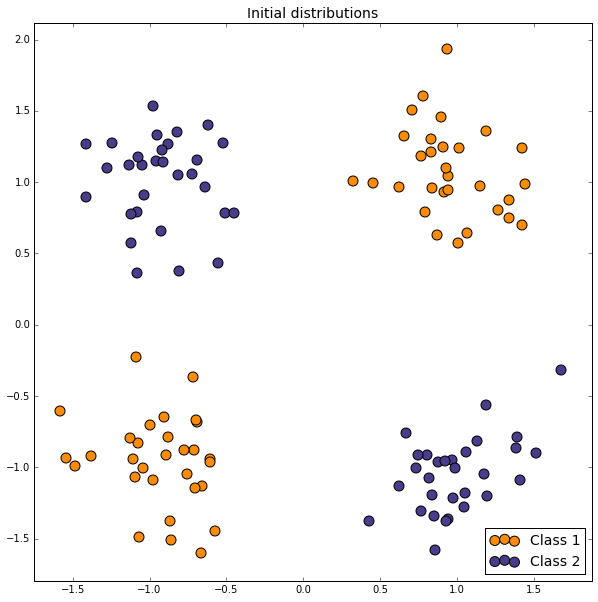}
 \caption{Illustration of the evolution of the transport for two classes $c=1$ and $c'=2$ }
  \label{fig:visu1}
\end{figure}

\begin{figure*}[htbp]
  \centering
  \includegraphics[width=.45\linewidth]{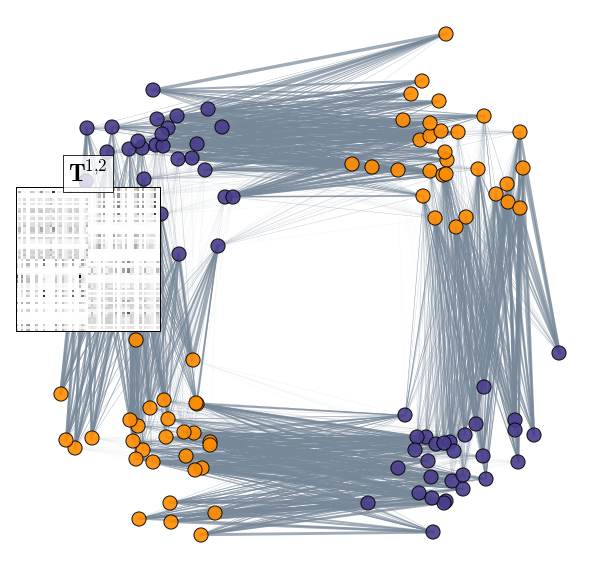}
  \includegraphics[width=.45\linewidth]{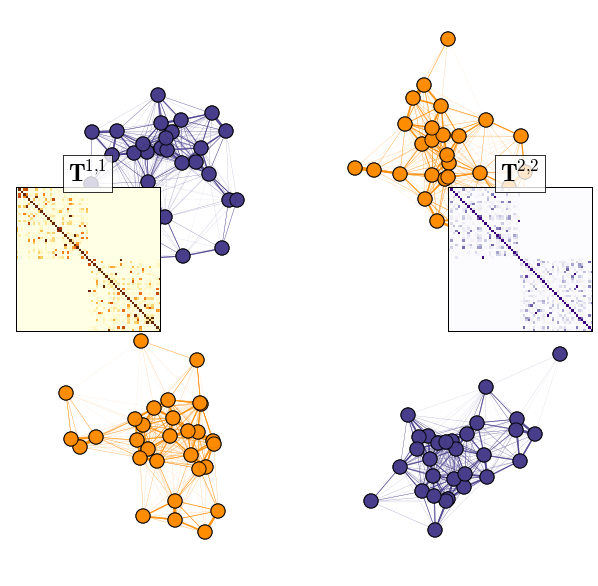}
    \centering
  \includegraphics[width=.45\linewidth]{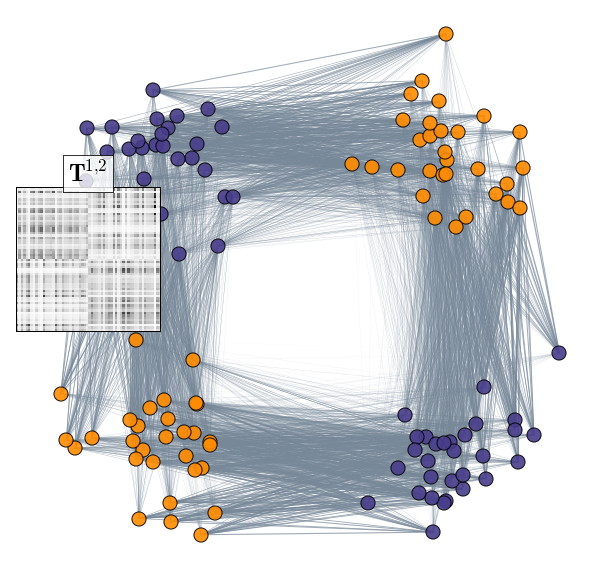}
  \includegraphics[width=.45\linewidth]{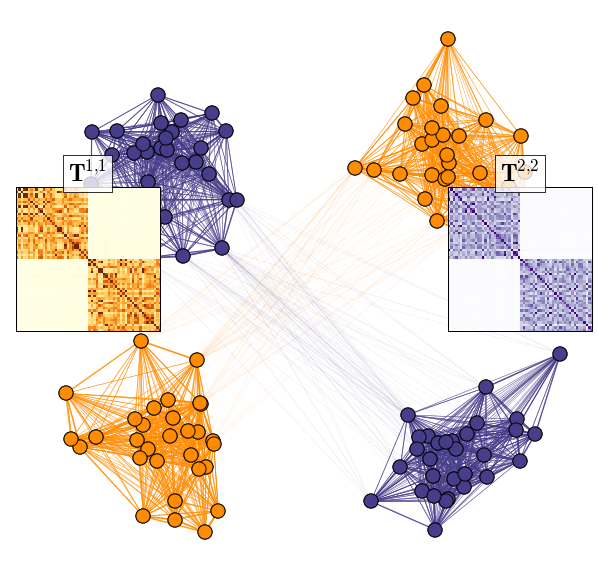}
  \centering
  \includegraphics[width=.45\linewidth]{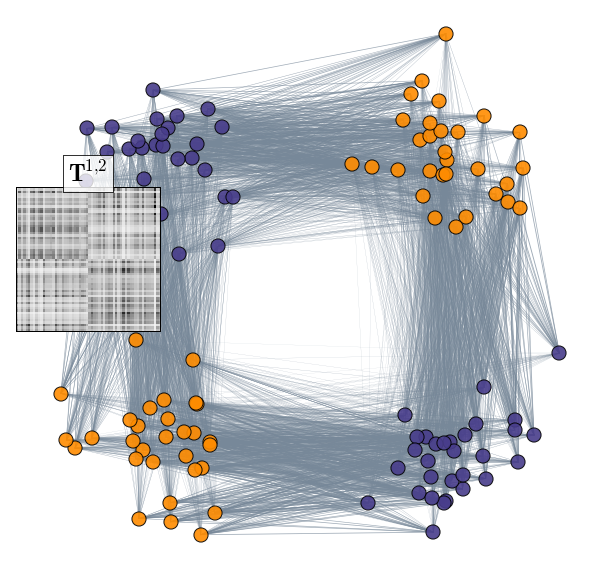}
  \includegraphics[width=.45\linewidth]{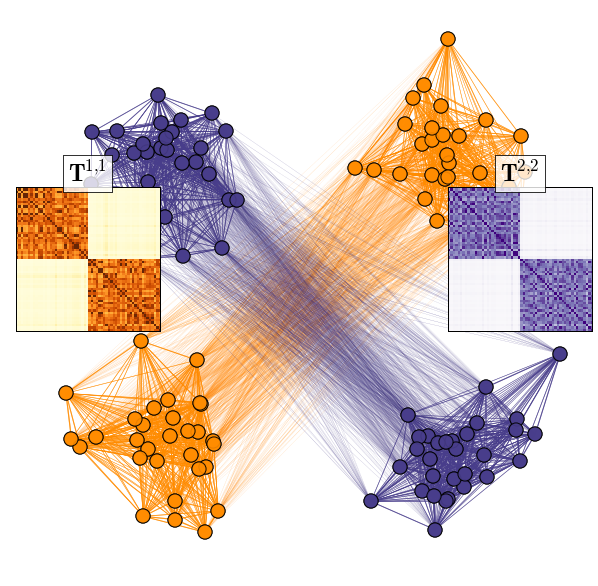}

 \caption{Illustration of the evolution of the transport for two classes for three values of the $\lambda$ parameter (first row) $\lambda=1$ (second row) $\lambda=0.5$ (last row) $\lambda=0.1$. The left column illustrates inter-class relations, while the right column illustrates intra-class relations.   }
  \label{fig:tvisu}
\end{figure*}

\section{Implicit function gradient computation}
\label{sec:deta-impl-funct}

 In this
section, we propose to compute this Jacobian based on the implicit
function theorem.

{For clarity's sake, in this subsection we will not use
the $c,c'$ indices
and $\T$ represents an optimal transport matrix between $n$ and $m$
samples projected with $\P$. First, we express the function $\T(\P)$ as
an implicit function using the optimality conditions of
the equation defining the optimal $\T$ in Equation 6. The Lagrangian of this problem can be expressed as
\cite{CuturiSinkhorn}:
\begin{align*}\small
\mathcal{L}=&\sum_{i,j}\left(
  t_{i,j}m_{i,j}(\P)+t_{i,j}\log(t_{i,j})\right)\\&+\sum_i\alpha_i\left(\sum_jt_{i,j}-r_i\right)+\sum_j\beta_j\left(\sum_it_{i,j}-c_j\right)
\end{align*}
where $\balpha$ and $\bbeta$ are the dual variables associated to the
sum constraints, $m_{i,j}=\|\P\x_i-\P\z_j\|^2$ and in our particular
case $r_i=\frac{1}{n}$ and
$c_j=\frac{1}{m},\forall i,j$. One can define an implicit fonction $g(\P,\T,\balpha,\bbeta):
\dbR^{p\times d + n\times m+ n+
   m}\rightarrow \dbR^{n\times m+ n+
   m} $ from the above lagrangian by
computing its gradient \emph{w.r.t.} ($\T,\balpha,\bbeta$) 
and setting
it to zero owing to optimality. 
The implicit function theorem gives us the following relation:
$$\nabla_\P g=\frac{\partial g}{\partial \P}+\frac{\partial
  g}{\partial \T}\frac{\partial \T}{\partial \P}+\frac{\partial
  g}{\partial \balpha}\frac{\partial \balpha}{\partial
  \P}+\frac{\partial g}{\partial \bbeta}\frac{\partial
  \bbeta}{\partial \P}=\mathbf{0}$$
which can be reformulated as 
\begin{equation}
\begin{bmatrix}\displaystyle
 \frac{\partial \T}{\partial \P}\\\displaystyle\frac{\partial \balpha}{\partial
  \P}\\\displaystyle\frac{\partial
  \bbeta}{\partial \P}
\end{bmatrix}=-\E^{-1}\frac{\partial g}{\partial \P},\quad\text{ with }\quad\E=\begin{bmatrix}\displaystyle
  \frac{\partial
  g}{\partial \T}&\displaystyle\frac{\partial
  g}{\partial \balpha}&\displaystyle\frac{\partial g}{\partial \bbeta}
\end{bmatrix}\label{eq:implicitgrad}
\end{equation}
when the function is well defined and $\E$ is invertible. The derivative
$\frac{\partial \T}{\partial \P}$ can be deduced from the upper part of
the term on the left. Note that all the partial derivatives in
Equ.~\eqref{eq:implicitgrad} are easy to compute. Additionally, $\E$
is a $(pd+nm+n+m)\times (pd+nm+n+m)$ matrix
which is very sparse, as shown in the sequel. However,
assuming for instance 
that the number of points in each class $m=n$ is the same, using this
technique would amount to solve a large $n^2\times n^2$ linear
system with a worst case complexity of $O(n^6)$.}

We now detail the computation of the gradient using the
implicit function theorem. Note that we use the notation of the paper
and that we want to compute the Jacobian  $\frac{\partial \T}{\partial \P}$. First we compute
the implicit function  $g(\P,\T,\balpha,\bbeta): \dbR^{p\times d + n\times m+ n+
  m}\rightarrow \dbR^{n\times m+ n+
  m} $ from the Lagrangian function given in the paper by computing the OT problem optimality conditions:
\begin{align*}
  \frac{\partial\mathcal{L}}{\partial
  t_{k,l}}&=\lambda (\x_k-\z_l)^\top\P^\top\P(\x_k-\z_l)+\log(t_{k,l})\\
           &\quad+1+\alpha_k+\beta_l=0
                                                    \\
 \frac{\partial\mathcal{L}}{\partial
 \alpha_i}&=\sum_jt_{i,j}-r_i=0\\
 \frac{\partial\mathcal{L}}{\partial
 \beta_j}&=\sum_it_{i,j}-c_j=0 
\end{align*}
$ \forall k,l,i,j$. The Jacobian $\frac{\partial \T}{\partial \P}$ can
be computed using the implicit function by solving the following
linear problem:
\begin{equation}
\begin{bmatrix}\displaystyle
 \frac{\partial \T}{\partial \P}\\\displaystyle\frac{\partial \balpha}{\partial
  \P}\\\displaystyle\frac{\partial
  \bbeta}{\partial \P}
\end{bmatrix}=-\E^{-1}\frac{\partial g}{\partial \P},\quad\text{ with }\quad\E=\begin{bmatrix}\displaystyle
  \frac{\partial
  g}{\partial \T}&\displaystyle\frac{\partial
  g}{\partial \balpha}&\displaystyle\frac{\partial g}{\partial \bbeta}
\end{bmatrix}\label{eq:linearsys}
\end{equation}

First $\t=\text{vec}(\T)$ is vectorized as in Matlab with column major format.
\begin{align}
   \frac{\partial
  g}{\partial \T}=
  \begin{bmatrix}
   \text{diag}(\frac{1}{\t})\\\I_n \I_n,\dots,\I_n\\\L_{m,n}^1 \L_{m,n}^2,\dots,\L_{m,n}^m
  \end{bmatrix}
\end{align}
where $\L_{m,n}^k$ is  $\dbR^{m\times n}$ matrix of $0$ with all coefficients on line $k$
equal to 1. 
\begin{align}
   \frac{\partial
  g}{\partial \balpha}=
  \begin{bmatrix}
    {\I_n} \\  {\I_n}\\\hdots\\ {\I_n}
\\\zero_{n,n}\\\zero_{m,n}
  \end{bmatrix}\quad\quad\quad   \frac{\partial
  g}{\partial \bbeta}=
  \begin{bmatrix}
    {\L_{m,n}^1}^\top \\  {\L_{m,n}^2}^\top\\\hdots\\{\L_{m,n}^m}^\top
\\\zero_{n,n}\\\zero_{m,n}
  \end{bmatrix}
\end{align}
Now we compute the last element $\frac{\partial g}{\partial \P}$ using
a vectorization $\p=\text{vec}(\P)$ and $\Delta_{i,j}=\x_i-\z_j$
First note that 
$$\frac{\partial \x^\top\P^\top\P\x}{\partial
  p_{m,l}}=2x_l\sum_ix_ip_{m,i}=2x_l(\P(m,:)\x)$$
which leads to the following Jacobian
\begin{align}
  \frac{\partial g}{\partial \P}=2\lambda
  \begin{bmatrix}
\Delta_{1,1}^\top   \otimes \Delta_{1,1}^\top\P^\top  \\
 \Delta_{2,1}^\top   \otimes  \Delta_{2,1}^\top\P^\top \\
\hdots\\
 \Delta_{n,m}^\top   \otimes  \Delta_{n,m}^\top\P^\top \\
\zero_{n+m,dp}
  \end{bmatrix}
\end{align}
where the upper part of the matrix can be seen as a column-only
Kroenecker product between $\Delta$ and $\P\Delta$.

All the elements are now in place for the linear system \eqref{eq:linearsys},
which can be solved using any efficient method for sparse linear
system. 

\section{Lemmas}

\begin{lemma}
If the matrix $\M \in \mathbb{R}^{n \times n }$ is non-negative symmetric then
the matrix $\T$ defined as in
$$
\argmin_{\T \in U_{n,n}} \lambda \langle \T, \M \rangle - \Omega(\T)
$$
is also symmetric non-negative. Here, $\Omega$ is the entropy of the
matrix $\T$ 
\end{lemma}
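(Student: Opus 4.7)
The plan is to exploit two structural properties: \emph{(i)} the feasible set $U_{n,n}$ and the objective are both invariant under the transposition map $\T \mapsto \T^T$ whenever $\M$ is symmetric, and \emph{(ii)} the objective is strictly convex, so the minimizer is unique. Combining these two facts forces the unique optimizer to coincide with its own transpose.

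First, non-negativity is immediate: by the definition of $U_{n,n}$ given in the paper, every feasible $\T$ (and in particular $\T^\star$) has non-negative entries. So I only need to establish symmetry.

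Next, I would argue strict convexity. The linear term $\lambda\langle\T,\M\rangle$ is convex, and the negative entropy $-\Omega(\T)=\sum_{ij}t_{ij}\log t_{ij}$ is strictly convex on the interior of the simplex. Since the marginal constraints in $U_{n,n}$ are symmetric (both row and column marginals equal $\mathbf{1}_n/n$), any feasible point has a non-degenerate support and the optimizer $\T^\star$ is unique.

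Then I would check that transposition preserves both feasibility and objective value. If $\T \in U_{n,n}$, then $\T^T \mathbf{1}_n = \mathbf{1}_n/n$ and $(\T^T)^T \mathbf{1}_n = \mathbf{1}_n/n$, so $\T^T \in U_{n,n}$. For the objective, using $\M = \M^T$,
\begin{equation*}
\langle \T^T, \M \rangle = \tr(\T \M) = \tr(\M\T) = \tr(\T\M^T) = \langle \T, \M \rangle,
\end{equation*}
and $\Omega(\T^T) = -\sum_{ij} t_{ji}\log t_{ji} = \Omega(\T)$ by a relabeling of indices. Hence the value of the objective is the same at $\T^\star$ and at $(\T^\star)^T$, so $(\T^\star)^T$ is also a minimizer. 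By uniqueness, $\T^\star = (\T^\star)^T$, proving symmetry.

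The only delicate point is making sure uniqueness holds; this is not really an obstacle since strict convexity of $-\Omega$ on the relative interior of the Birkhoff-type polytope $U_{n,n}$ is standard, and the optimum is attained in this interior whenever $\lambda < \infty$ because of the barrier behavior of $x\log x$ as $x\to 0^+$. Once uniqueness is in hand, the symmetrization argument is immediate.
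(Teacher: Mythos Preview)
Your proposal is correct and follows essentially the same approach as the paper: show that the objective is strictly convex (hence the minimizer is unique), verify that transposition preserves both feasibility in $U_{n,n}$ and the objective value (using $\M=\M^T$ and the entrywise nature of $\Omega$), and conclude $\T^\star=(\T^\star)^T$. The paper's proof differs only cosmetically, computing $\langle \T^T,\M\rangle=\langle \T,\M\rangle$ via an index-by-index sum rather than your trace manipulation.
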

\begin{proof} As this optimization problem is strictly convex for
$\lambda < \infty$, and thus admits an unique solution. We show in the sequel that $\T^\top$ achieves the same objective value than $\T$ and thus $\T^\top$ is 
also a minimizer, which naturally leads to $\T^\top = \T$.

First note that the constraints are symmetric thus, $\T^\top$ is feasible.
In addition because the entropy only depends on single entries of the
matrix hence $\Omega(\T) = \Omega(\T^\top)$. Finally, 
\begin{align}\nonumber
\langle \T^\top, \M \rangle& = \sum_{i,j} M_{i,j} T^\top_{i,j}
= \sum_{i,j} M_{i,j} T_{j,i} = \sum_{i,j} M_{j,i} T_{j,i} \\\nonumber
&=  \langle \T, \M \rangle \nonumber
\end{align}
which proves that both matrices lead to the same objective values.
\end{proof}

\begin{lemma}
Suppose that $\T$ is the solution of an entropy-smoothed optimal
transport problem, with matrix $\K$ being symmetric and 
such that $\forall i, K_{i,i}= 1$. There exists
a vector $\v$ such that $\forall i,j, \T_{i,j} = \K_{i,j} \v_i \v_j$
and $\forall i, \v_i \leq 1$.  
\end{lemma}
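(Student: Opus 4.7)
The plan is to combine the Sinkhorn-Knopp scaling form of $\T$ with the symmetry established in Lemma~1, then exploit $K_{i,i}=1$ together with the marginal constraint to obtain the upper bound.

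First, I would invoke the standard factorization recalled in Equation~\eqref{eq:solregwda}: since $\T$ solves the entropy-regularized optimal transport problem with kernel $\K$, there exist strictly positive vectors $\u$ and $\v$ such that $T_{i,j} = u_i\,K_{i,j}\,v_j$ for all $i,j$. Second, by Lemma~1 the matrix $\T$ inherits the symmetry of $\K$, so $u_i K_{i,j} v_j = T_{i,j} = T_{j,i} = u_j K_{j,i} v_i = u_j K_{i,j} v_i$. Because $K_{i,j} = e^{-\lambda M_{i,j}} > 0$ everywhere, dividing by $K_{i,j}$ gives $u_i v_j = u_j v_i$ for all $i,j$, which forces the ratio $u_i/v_i$ to be a constant $c > 0$. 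Writing $u_i = c\,v_i$ and absorbing $\sqrt{c}$ into $v$ (i.e.\ replacing $v_i$ by $\tilde v_i := \sqrt{c}\,v_i$) yields the desired factorization $T_{i,j} = K_{i,j}\,\tilde v_i\,\tilde v_j$.

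Third, I would deduce the bound $\tilde v_i \leq 1$ directly. Evaluating the factorization on the diagonal and using $K_{i,i} = 1$ gives $T_{i,i} = \tilde v_i^{\,2}$. The row-marginal constraint of the transport polytope $U_{n,n}$ reads $\sum_{j} T_{i,j} = 1/n$, and since every $T_{i,j} \geq 0$, this implies $T_{i,i} \leq 1/n$, hence $\tilde v_i^{\,2} \leq 1/n$, and therefore $\tilde v_i \leq 1/\sqrt{n} \leq 1$.

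I expect the main subtlety to be the identification of the two scaling vectors in step two: the Sinkhorn factorization is unique only up to the reciprocal rescaling $(\u,\v) \mapsto (\alpha\u,\v/\alpha)$, so one must justify that the symmetry of $\T$ genuinely forces $u_i/v_i$ to be \emph{constant} across $i$. This is where positivity of all entries of $\K$ is essential (so that dividing through by $K_{i,j}$ is legitimate for every pair $i,j$); once this is in place, the rest of the argument is a short calculation from the marginal constraint.
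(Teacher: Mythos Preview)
Your proof is correct and follows essentially the same route as the paper: establish the symmetric scaling $T_{i,j}=K_{i,j}v_iv_j$ from Sinkhorn--Knopp, then read off $v_i^2=T_{i,i}\leq 1/n$ from $K_{i,i}=1$ and the marginal constraint. The only difference is that the paper simply asserts the symmetric factorization (citing Sinkhorn--Knopp properties), whereas you derive it explicitly from Lemma~1 and the positivity of $\K$; your version is slightly more self-contained but otherwise identical in substance.
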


\begin{proof} Existence of the $\v$ such that $\T_{i,j} = \K_{i,j} \v_i \v_j$
 comes from the fact that the optimization problem can be solved
using the Sinkhorn-Knopp algorithm.  Through the constraints of the
optimal transport problem, we have
$$ \forall i,j\,\,\T_{i,j} =  \K_{i,j} \v_i \v_j \leq \frac{1}{n}.$$
When, $i=j$, as $\K_{i,i} = 1$, we have $\v_i^2 \leq \frac{1}{n}$
and thus $\v_i \leq 1$. 
\end{proof}

\end{document}